\definecolor{tab-blue}{HTML}{1f77b4} 
\definecolor{tab-orange}{HTML}{ff7f0e} 
\definecolor{tab-green}{HTML}{2ca02c} 
\newcommand{\mca}{\mathcal}
\newcommand{\hhsp}{\-\hspace}
\newcommand*\colvec[3][]{
    \begin{bmatrix}\ifx\relax#1\relax\else#1\\\fi#2\\#3\end{bmatrix}
}
\newcommand*\boxSizeOfMax[1]{\makebox[\widthof{max}][c]{#1}}
\newcommand{\sthat}{\boxSizeOfMax{s.t.}}
\definecolor{wisconsin-red}{rgb}{0.6,0,0}
\definecolor{darkgreen}{rgb}{0.2,0.6,0.2}
\definecolor{maroon}{rgb}{0.5, 0.0, 0.0}
\definecolor{violet}{rgb}{0.75, 0.0, 1.0}
\definecolor{lightgray}{gray}{0.9}
\definecolor{navyblue}{rgb}{0.0, 0.0, 0.35}
\definecolor{darkmidnightblue}{rgb}{0.0, 0.2, 0.4}
\definecolor{Gray}{gray}{0.75}
\definecolor{darkgreen}{rgb}{0,0.5,0}
\newcommand{\policy}{\mu}
\newcommand{\setPolicy}{\Upsilon}
\newcommand{\bePt}{\pi}
\newcommand{\vectBePt}{\boldsymbol{\bePt}}
\newcommand{\timeHorizon}{T}
\newcommand{\genericState}{X}
\newcommand{\genericAction}{Y}
\newcommand{\setTime}{\mca{T}}
\newcommand{\setState}{\mca{S}}
\newcommand{\setAction}{\mca{A}}
\newcommand{\setObs}{\mca{O}}
\newcommand{\beSimp}{\operatorname{\Pi}(\setState)}
\newcommand{\setTRP}{\mca{P}}
\newcommand{\setOBSModel}{\mca{Z}}
\newcommand{\setReward}{\mca{R}}
\newcommand{\rewSalvage}{\phi}
\newcommand{\rewTerminal}{\varphi}
\newcommand{\indAction}{a}
\newcommand{\indAcWait}{\mathbb{W}}
\newcommand{\indAcMammo}{\mathbb{M}}
\newcommand{\indAcMMRI}{\mathbb{M}\&\mathbb{R}}
\newcommand{\indAcMUS}{\mathbb{M}\&\mathbb{U}}
\newcommand{\disuScr}{\Delta^{Scr}}
\newcommand{\disuTP}{\Delta^{TP}}
\newcommand{\disuFP}{\Delta^{FP}}
\def\Plus{\texttt{+}}
\def\Minus{\texttt{-}}
\newcommand{\indTime}{t}
\newcommand{\indState}{i}
\newcommand{\indStateNew}{j}
\newcommand{\indStateHealthy}{\texttt{0}}
\newcommand{\indStateLocalized}{\texttt{2}}
\newcommand{\indStateRegional}{\texttt{3}}
\newcommand{\indObs}{\theta}
\newcommand{\indObsNegative}{\indObs^{\Minus}}
\newcommand{\indObsPositive}{\indObs^{\Plus}}
\newcommand{\indGrid}{k}
\newcommand{\indGridNew}{\ell}
\newcommand{\parfRew}{r}
\newcommand{\parfRewGeneric}{\omega}
\newcommand{\parfLTRiskGeneric}{\gamma}
\newcommand{\parfCost}{c}
\newcommand{\parfObs}{z}
\newcommand{\parfTrp}{p}
\newcommand{\parfGridTr}{f}
\newcommand{\parfDeathCaAfterScreen}{q}
\newcommand{\parBudgetLim}{C}
\newcommand{\funcOptV}{Q}
\newcommand{\funcApxV}{\hat{\mca{\funcOptV}}}
\newcommand{\varDualLP}{{x}}
\newcommand{\parBeDistn}{\delta}
\newcommand{\varDetPol}{\nu}
\newcommand{\scaleFactor}{\varsigma}
\newcommand{\setGridIndex}{\mca{K}}
\newcommand{\setGrid}{\mca{G}}
\newcommand{\grPt}{b}
\newcommand{\ugrPt}{\grPt'}
\newcommand{\vectGrPt}{\boldsymbol{\grPt}}
\newcommand{\ugrPtvect}{\boldsymbol{\ugrPt}}
\newcommand{\resoVal}{\rho}
\newcommand{\resoThreshold}{\psi}
\newcommand{\vectResoThreshold}{\boldsymbol{\resoThreshold}}
\newcommand{\vectResoVal}{\boldsymbol{\resoVal}}
\newcommand{\indResoVal}{\xi}
\newcommand{\setAllowedBeliefComponents}{\mathcal{B}}
\newcommand{\parDiscount}{\lambda}
\newcommand{\objweight}{w}
\begin{document}\sloppy
\title{A multi-objective constrained partially observable Markov decision process model for breast cancer screening}
\titlerunning{A multi-objective CPOMDP model for breast cancer screening}
\author{Robert K. Helmeczi \and Can Kavaklioglu\and Mucahit Cevik  \and Davood Pirayesh Neghab}

\institute{
    Robert K. Helmeczi \at Toronto Metropolitan University, Toronto, Canada
    \and
    Can Kavaklioglu \at Toronto Metropolitan University, Toronto, Canada
    \and
    Mucahit Cevik \at Toronto Metropolitan University, Toronto, Canada\\ \email{mcevik@torontomu.ca}
    \and
    Davood Pirayesh Neghab \at Toronto Metropolitan University, Toronto, Canada
}

\date{Received: date / Accepted: date}

\maketitle

\vspace*{-0.3cm}
\begin{abstract}
Breast cancer is a common and deadly disease, but it is often curable when diagnosed early. 
While most countries have large-scale screening programs, there is no consensus on a single globally accepted guideline for breast cancer screening. 
The complex nature of the disease; the limited availability of screening methods such as mammography, magnetic resonance imaging (MRI), and ultrasound; and public health policies all factor into the development of screening policies. 
Resource availability concerns necessitate the design of policies which conform to a budget, a problem which can be modelled as a constrained partially observable Markov decision process (CPOMDP). 
In this study, we propose a multi-objective CPOMDP model for breast cancer screening which allows for supplemental screening methods to accompany mammography.
The model has two objectives: maximize the quality-adjusted life years (QALYs) and minimize lifetime breast cancer mortality risk (LBCMR).
We identify the Pareto frontier of optimal solutions for average and high-risk patients at different budget levels, which can be used by decision-makers to set policies in practice.
We find that the policies obtained by using a weighted objective are able to generate well-balanced QALYs and LBCMR values. 
In contrast, the single-objective models generally sacrifice a substantial amount in terms of QALYs/LBCMR for a minimal gain in LBCMR/QALYs.
Additionally, our results show that, with the baseline cost values for supplemental screenings as well as the additional disutility that they incur, they are rarely recommended in CPOMDP policies, especially in a budget-constrained setting.
A sensitivity analysis reveals the thresholds on cost and disutility values at which supplemental screenings become advantageous to prescribe.
\end{abstract}

\keywords{Markov decision processes, Constrained POMDPs, Breast cancer screening, Medical decision making}

\section{Introduction}
Breast cancer is the most commonly occurring cancer among women and is the second leading cause of cancer death \citep{acs2019breast}. 
\citet{acs2020cancer} estimated that among US women, there would be 285,360 new deaths by cancer in 2020, 14.9 \% of which would be from breast cancer. 
While the incidence of breast cancer has increased by 30\% (62\% of which were localized breast cancers) from 1975 to 2010, its associated mortality rate has declined by 34\% \citep{narod2015mortality}.
This decline can be attributed to improvements in treatment such as surgery and radiation for non-metastatic breast cancers \citep{waks2019breast}. 
However, the pace of this decline has slowed from an annual decrease of 1.9\% during 1998 through 2011 to 1.3\% during 2011 through 2017 \citep{acs2019breast}. 
A possible reason for this is that, despite advancements in non-metastatic breast cancer treatments, treatments for metastatic cancers are rarely permanent \citep{waks2019breast}. 
Furthermore, when a patient self-diagnoses breast cancer, the cancer is typically already metastasized. 
Thus, early diagnosis could be crucial to accelerate the decline in mortality rates for this disease.

Various screening methods are used to diagnose breast cancer, with the most common approach being mammography. 
Mammography is capable of detecting breast cancer years before physical symptoms develop, and so its usage contributes to lower mortality rates. 
The five-year breast cancer survival rate in countries with population-based screening programs (e.g., North America, Sweden, and Japan) is over 80\%, while it is less than 40\% in low-income countries that do not have established screening programs \citep{coleman2008cancer}. 
Moreover, breast cancers detected with mammography (as compared to physical examination) are generally non-metastatic and more likely to be treated with breast surgery (56\% compared to 32\%), and less likely to receive adjunct chemotherapy (28\% compared to 56\%), resulting in lower overall costs of treatment \citep{barth2005detection}. 

Despite its effectiveness in early diagnosis and lowering mortality rates, mammography is not without its downsides. Firstly, mammography screenings are expensive: it was estimated that the total cost of mammography screenings in the United States in 2010 was \$7.8 billion~\citep{o2014aggregate}.  
Secondly, a significant percentage of mammography screenings result in false positives. 
\citet{fuller2015breast} conducted a simulation study which showed that out of 10,000 women undergoing annual mammography screenings, 4,970 to 6,130 of them receive at least one false positive result over 10 years, with 700 to 980 resulting in an unnecessary biopsy.
Another study estimates that the cumulative risk of a false positive after 10 mammograms is 49.1\%, as compared to 22.3\% after 10 clinical breast exams \citep{elmore1998ten}. 
False positives result in unnecessary anxiety for patients and additional costs for follow-up procedures. 

While there have been advancements in mammography technology, about 20\% of mammography screenings still result in false negatives, which are much more common among women with dense breasts~\citep{acs2019early}.
Mammographies are also less successful at detecting breast cancer in women with BRCA1 and BRCA2 mutations \citep{tilanus2002brca1}. 
Supplemental tests with a higher sensitivity compared to mammography, such as MRI, can be used to screen women with a higher risk of developing breast cancer. 
For instance, \citet{acs2019early} recommends women with a lifetime breast cancer risk of 20\% or more undergo supplemental MRI screening in the case of negative mammography results. 
However, these supplemental tests incur an additional cost and increase the likelihood of false positives, making them a potentially undesirable alternative.

Research into the design and implementation of breast cancer policies often focuses on quality-adjusted life years (QALYs). 
This metric assigns higher rewards to policies that extend the length of a patient's life while also accounting for the negative impacts associated with screening. 
Specifically, screening procedures can be time-consuming, and the risk of false positives means that patients subjected to an extensive screening regimen can also undergo periods of anxiety awaiting follow-up test results, often from invasive procedures like biopsies. 
In contrast, lifetime breast cancer mortality risk (LBCMR) is a metric that considers only the likelihood of a woman developing breast cancer and dying as a result. 
Unlike QALYs, this metric does not account for the negative effects of undergoing frequent screenings. 
In practice, policies seeking to maximize QALYs may lead to substantially higher LBCMR values. Similarly, policies which try to minimize LBCMR may decrease QALYs significantly for only a small improvement in LBCMR.
By considering both QALYs and LBCMR together, and assessing the trade-offs in between, medical professionals are afforded greater flexibility in their decision-making process, and are better able to create personalized policies for patients based on their risk tolerance.

In summary, the risks associated with false positives and false negatives, as well as the prohibitive cost of screenings, complicate the task of designing a policy for the breast cancer screening problem. 
Additionally, the risk of developing breast cancer is not equal among women, as factors including genetic mutations, family history, and breast density can all considerably increase the risk of developing breast cancer~\citep{acs2019early, sandikci2018screening}.
To address these issues, we formulate a multi-objective CPOMDP model resulting in a personalized, budget-constrained screening policy that maximizes a patient's QALYs while minimizing her LBCMR.

Our paper's contribution can be viewed from both methodological and practical standpoints. 
In terms of methodology, we extend the CPOMDP framework proposed by \citet{Cevik2018} by incorporating multiple objectives to assess the trade-off between QALYs and LBCMR. 
In addition, we empirically show that CPOMDP algorithm run times can be improved substantially by identifying the relevant belief states at each decision epoch.
From a practical point of view, we propose a more comprehensive modelling framework to increase early detection rates and QALYs while reducing LBCMR.
We accomplish this by considering the impacts of generated policies on QALYs and LBCMR simultaneously. 
The resulting policies are found to perform well in both regards, often leading to QALYs and LBCMR values that do not deviate significantly from the corresponding outcomes of single objective models. 
We also consider supplemental tests in addition to mammography and observe their impact on QALYs and LBCMR.
Our numerical results show that a limited amount of supplemental tests are recommended due to their associated costs and disutilities.
We also perform a sensitivity analysis on both the disutility of the supplemental tests as well as the cost of screenings and identify thresholds at which supplemental screenings become favourable.
In addition, we show the Pareto frontier over the QALY maximization and LBCMR minimization objectives, which help understand the trade-offs between these two important modelling objectives.

The rest of the paper is organized as follows. 
Section~\ref{sec:litReview} provides a brief discussion of the literature surrounding cancer screening and similar modelling frameworks. 
Section~\ref{sec:Methodology} provides the mathematical background for the CPOMDP methodology, which is followed up by our proposed model. 
The results are then presented in Section~\ref{sec:Results}, followed by Section~\ref{sec:Conclusion} where we discuss our findings and future research directions.

\section{Literature Review}\label{sec:litReview}

Different methodologies have been employed to solve cancer screening problems including simulation-based \citep{Mandelblatt2016, sprague2015benefits} and optimization-based techniques~\citep{saville2019operational}.
Several studies have been conducted to determine efficient cancer-screening policies by evaluating their cost-effectiveness using Markov models~\citep{le2016structural, ralaidovy2018cost, gopalappa2018two, bansal2020analysis}. 
\citet{tollens2021cost} assessed the rate of false positive outcomes in two rounds of screening of women with extremely dense breasts and evaluated their impact on cost-effectiveness.
\citet{kaiser2021cost, kaiser2021impact} employed Markov models to evaluate cumulative costs, QALYs, false positive, and false negative results of breast cancer screening and conducted deterministic and probabilistic analyses to test the model stability.

Some other studies developed simulation-based analysis using partially observable Markov chain (POMC) models.
For example, \citet{Maillart2008} and \citet{MADADI2015} assessed breast cancer screening policies where the former focuses on balancing cost-effectiveness with lifetime mortality risk, and the latter focuses on the impact of patient adherence to screening recommendations. 
Similarly, \citet{molani2019partially} developed POMC models to examine the risks of various screening policies while considering a patient's adherence behavior. 
Later, \citet{molani2020investigating} formulated a POMC model to study the effectiveness of supplemental screening (e.g., ultrasound) while incorporating the impact of radiologists' bias on patients' screening outcomes. 
POMC models have also been used for other cancer types. 
For instance, \citet{li2014using} developed POMC models to evaluate colorectal cancer screening policies among others. 
While these studies comprehensively evaluate the impacts of various factors such as screening age and intervals, they do not attempt to find optimal policies. 

A strand of the literature develops optimization-based techniques to find policies for various cancer screening problems. 
For instance, \citet{akhavan2017markov} optimized cervical cancer screening decisions by modelling the problem as a Markov decision process (MDP), accounting for several factors such as a patient's age and screening results. 
Similarly, \citet{chhatwal2010optimal} and \citet{alagoz2013optimal} optimized the post-mammography biopsy decisions for breast cancer using MDPs. 
\citet{vargas2015optimal} incorporated the impact of over-treatment and the potential delay in cancer detection into an MDP model of the breast cancer screening problem. 
\citet{ayvaci2018preference} developed a stochastic modelling framework using MDPs to optimize risk-sensitive diagnostic decisions after a mammography exam by considering the variations in risk preferences of the individuals. 
\citet{tunc2018new} optimized breast cancer diagnostic decisions to reduce the overdiagnosis costs in a patient's lifetime based on cancer subtypes using a large-scale MDP model with many finite states. 
\citet{imani2020markov} designed an MDP model to optimize the costs and the QALYs for breast cancer treatment plans considering prophylactic surgery, radiation therapy, chemotherapy, and their combinations.  
On the other hand, in most cases, a patient's state is not fully observable, and therefore partially observable MDPs (POMDPs), while more difficult to solve due to a generally intractable state space, are more suitable for modelling the cancer screening problem since they capture the uncertainty related to the imperfect state information. POMDPs have been used to develop policies for a wide variety of cancer screening problems such as prostate~\citep{zhang2012optimization, zhang2012optimizationb, zhang2018partially, li2022optimizing}, colorectal~\citep{erenay2014optimizing}, liver~\citep{chen2018optimal}, cervical~\citep{ebadi2021personalized}, and lung cancers~\citep{petousis2019optimizing}.

Numerous studies have specifically investigated the breast cancer screening problem.
\citet{ayer2012} proposed a finite-horizon POMDP to identify policies for patients based on their risk factors. 
\citet{ayer2015inverse} and \citet{ayer2016heterogeneity} formulated the breast cancer screening problem using POMDPs, where the former focuses on finding the
optimal screening policy for given sensitivity and specificity values, and the latter incorporates both the patient's adherence behavior and their breast cancer risk into the decision process. \citet{otten2017stratified} and \citet{witteveen2018risk} proposed a POMDP model for breast cancer and considered the personal risk of developing cancer to investigate the resource allocation for optimal and personal follow-up. 
Additionally, \citet{sandikci2018screening} evaluated the impact of supplemental screenings by sequentially considering the screening decisions. 
That is, their model first determines whether the patient should receive mammography, and the supplemental screening decision is then made according to the mammography outcome.
In a recent study, \citet{hajjar2022personalized} developed a model to identify the optimal screening decisions for an index disease (e.g., breast cancer) while incorporating a chronic condition (e.g. diabetes). 
In particular, they provided personalized breast cancer screening recommendations for women with diabetes and found several remarkable policy insights for this case. 
Other extensions to this context include formulating a POMDP with continuous states.  
For example, \citet{otten2020stratified} extended the study by \citet{otten2017stratified} to a continuous state problem by modelling the tumor size as a continuous-valued component in the state space, which is based on the assumption that tumor growth follows an exponential distribution.
\citet{horiguchi2021approach} also evaluated the periodic screening programs for breast cancer by converting a POMDP to a fully observable MDP with a continuous state space.

The optimization approaches discussed so far assume that the decision-maker has infinite resources, which is generally not the case. 
Hence, it is often useful to find optimal policies that satisfy certain constraints, such as limited screening resources. 
Several studies have proposed using constrained MDP models for different cancer screening problems~\citep{ayvaci2012, lee2019optimal}.
\citet{poupart2015} and \citet{lee2018monte} devised solution methods for constrained POMDPs (CPOMDPs) based on approximate linear programming and Monte Carlo tree search, respectively. 
Many CPOMDP applications followed, especially in the healthcare domain.
\citet{gan2019personalized} investigated opioid use disorder using CPOMDP models, and imposed restrictions on the available budget for interventions and surveillance (e.g., using wearable devices).
To account for the imperfect patient state information and budgetary constraints in the breast cancer screening problem, \citet{Cevik2018} proposed a finite-horizon CPOMDP model that maximizes patient QALYs while limiting the number of mammographies received during the patient's lifetime.
In this study, we extend \citet{Cevik2018}'s work by introducing supplemental screening tests to the decision process and investigating multi-objective optimization approaches for the breast cancer screening problem.
Different from \citet{sandikci2018screening}, we consider only the simultaneous usage of supplemental tests with mammography, which help simplify the problem for the CPOMDP framework.
Table~\ref{tab:litrev} summarizes the relevant studies in the literature and demonstrates the relative positioning of our work.

\renewcommand{\arraystretch}{1.15}
\begin{table}[!ht]
    \centering
    \caption{Summary of the most closely related studies in the literature.} 
    \label{tab:litrev}
    \resizebox*{0.905\textwidth}{!}{
    \begin{tabular}{p{0.31\textwidth}cm{0.15\textwidth}m{0.12\textwidth}m{0.11\textwidth}c}
    \toprule
    Research& \makecell{Problem/\\Cancer\\ Type}& \multicolumn{1}{c}{Objective} &\multicolumn{2}{c}{\makecell{Model \\ Specification}}& \makecell{Solution \\ Approach} \\
    \cmidrule{4-5}
    &&&\multicolumn{1}{c}{Constraint}&\multicolumn{1}{c}{Framework}\\
    \midrule
    \cite{chhatwal2010optimal}&Breast& \centering QALYs&\makecell{None}&\centering MDP&\makecell{DP}\\
    \cite{zhang2012optimization}&Prostate& \centering QALYs&\makecell{None}&\centering PMDP&\makecell{DP}\\
    \cite{ayvaci2012}&Breast& \centering QALYs&\makecell{Budget}&\centering MDP&\makecell{MIP}\\
    \cite{ayer2012}&Breast& \centering QALYs&\makecell{None}&\centering POMDP&\makecell{DP}\\
    \cite{alagoz2013optimal}&Breast& \centering QALYs&\makecell{None}&\centering MDP&\makecell{DCL}\\
    \cite{erenay2014optimizing}&Colorectal& \centering QALYs&\makecell{None}&\centering POMDP&\makecell{LP}\\
    \cite{ayer2016heterogeneity}&Breast& \centering QALYs&\makecell{None}&\centering POMDP&\makecell{LP}\\
    \cite{akhavan2017markov}&Cervical&\centering Cost& \centering None&\centering MDP&\makecell{DP}\\
    \cite{chen2018optimal}&\makecell{Liver}& \centering \makecell{Cost\\ effectiveness}&\makecell{None}&\centering MDP&\makecell{MIP}\\
    \cite{Cevik2018}&Breast& \centering QALYs&\makecell{No. of \\screenings}&\centering POMDP&\makecell{MILP}\\
    \cite{ccauglayan2018assess}&\makecell{Breast}& \centering QALYs&\makecell{Budget}&\centering POMDP&\makecell{MILP}\\
    \cite{lee2019optimal}&\makecell{Liver}& \centering \makecell{Early-stage\\ detections}&\makecell{ Screening\\ resources}&\centering POMDP&\makecell{ BIP}\\
    \cite{gan2019personalized}&\makecell{OUD}& \centering QALD&\makecell{Budget}&\centering POMDP&\makecell{DP }\\
    \midrule
    Our Study& Breast& \makecell{QALYs and\\ LBCMR}& \centering Budget&\centering POMDP& \makecell{MILP}\\
    \bottomrule
    \end{tabular}
    }
    \begin{tablenotes}
      \footnotesize
      \item OUD: Opioid Use Disorder, DP: Dynamic Programming, MIP: Mixed-Integer Programming, DCL: Double-Control Policy, LP: Linear Programming, MILP: Mixed-Integer Linear Programming, BIP: Binary Integer Programming
    \end{tablenotes}
\end{table}

\section{Methodology}\label{sec:Methodology}

In this section, we first present the preliminaries and the POMDP model for the breast cancer screening problem. 
We then propose a multi-objective CPOMDP model that simultaneously optimizes a patient's QALYs and LBCMR, subject to a budget constraint. 
Finally, we formulate a linear programming model to approximate the multi-objective CPOMDP model.

\subsection{Preliminaries}\label{sec:Preliminaries}
A discrete-time finite horizon POMDP model is defined as a seven tuple $\langle \setTime, \setState, \setAction, \setObs, \setReward, \setTRP, \setOBSModel \rangle$.
Each model component describes one aspect of the environment in which a decision maker --- referred to as \textit{the agent} --- takes actions and makes observations. 
We use $\setTime$ to denote the set of decision epochs and $\setState$ to denote the set of core states, which represents the partially observable states that the agent can occupy. 
At each decision epoch, the agent takes an action $\indAction\in \setAction$ and makes an observation $\indObs \in \setObs$.
Observations and transitions to new states are uncertain, and are therefore described by probabilities. 
That is, $\parfTrp_{ \indState \indStateNew}^{\indTime\indAction} \in \setOBSModel$ represents the probability that the agent will transition from state $\indState \in \setState$ to $\indStateNew \in \setState$ when action $\indAction \in \setAction$ is taken at time $\indTime \in \setTime$, and $\parfObs_{\indState \indObs}^{\indTime \indAction} \in \setOBSModel$ represents the probability that the agent will observe $\indObs \in \setObs$ after taking action $\indAction \in \setAction$ in state $\indState \in \setState$ at time $\indTime \in \setTime$. 
The agent collects rewards throughout the decision process according to $\parfRew_{\indState}^{\indTime\indAction} \in \setReward$, which denotes the reward for performing action $\indAction\in\setAction$ in state $\indState\in\setState$ at time $\indTime \in \setTime$. 
Table~\ref{table:notation} summarizes the notation used in our study.
\begin{table}[!htbp]
    \centering
    \caption{Notation used for the parameters and the mathematical formulas.} 
    \label{table:notation}
    \resizebox{0.925\textwidth}{!}{
    \small
    \begin{tabularx}{\textwidth}{lX}
    \toprule
    Notation& \multicolumn{1}{l}{Description}\\
    \midrule
    $\indTime \in \setTime$ & a decision epoch from the set of decision epochs \\*[0.08cm] 
    $\timeHorizon$ & the total number of decision epochs (decision horizon) \\*[0.08cm] 
    $\indState \in \setState$ & a state in the state space\\*[0.08cm]
    $\indAction \in \setAction$ & an action in the action space\\*[0.08cm]
    $\indObs \in \setObs$ & an observation in the observation space\\*[0.08cm]
    $\vectBePt \in \beSimp$ & a belief state in the belief space\\*[0.08cm]
    $\bePt_\indState$ & the $\indState$th component of $\vectBePt$; the probability that a patient is in core state $\indState$ \\*[0.08cm]
    $\indAcWait, \indAcMammo, \indAcMMRI, \indAcMUS$ & wait, mammography, mammography + MRI, mammography + ultrasound actions\\*[0.08cm]
    $\parfTrp_{ \indState \indStateNew}^{\indTime\indAction} \in \setTRP$ & probability of making a transition from state $\indState$ to state $\indStateNew$ by taking action $\indAction$ at time $\indTime$, and the corresponding probability space\\*[0.08cm]
    $\parfObs_{\indStateNew \indObs}^{\indTime \indAction} \in \setOBSModel$ & probability of making a observation $\indObs$ at state $\indStateNew$ by taking action $\indAction$ at time $\indTime$, and the corresponding probability space\\*[0.08cm]
    $\textit{spec}^{\indTime \indAction}$, $\textit{sens}^{\indTime \indAction}$ & the specificity and sensitivity parameters for action $\indAction\in \setAction$ at time $\indTime \in \setTime$, respectively \\*[0.08cm]
    $\parfRewGeneric_{\indTime}(\genericState_{\indTime}, \genericAction_{\indTime})$ & generic QALYs reward for action $\genericAction_\indTime$ in state $\genericState_\indTime$ at time $\indTime$ \\*[0.08cm] 
    $\parfRewGeneric_{\timeHorizon}(\genericState_{\timeHorizon})$ & generic QALYs reward in the final decision epoch $\timeHorizon$ in state $\genericState_\timeHorizon$ \\*[0.08cm]
    $\parfLTRiskGeneric_{\indTime}(\genericState_{\indTime}, \genericAction_{\indTime})$ & generic LBCMR reward for action $\genericAction_\indTime$ in state $\genericState_\indTime$ at time $\indTime$ \\*[0.08cm] 
    $\parfLTRiskGeneric_{\timeHorizon}(\genericState_{\timeHorizon})$ & generic LBCMR reward in the final decision epoch $\timeHorizon$ in state $\genericState_\timeHorizon$ \\*[0.08cm]
    $\epsilon$ & upper bound on LBCMR  \\*[0.08cm]
    $\parfRew_{\indState}^{\indTime\indAction} \in \setReward$ & immediate rewards for QALYs for taking action $\indAction$ at state $\indState$ and time $\indTime$, and the corresponding reward space\\*[0.08cm]
    $\rewSalvage_{\indState}^{\indTime}, \ \rewTerminal_{\indState}$ & salvage reward at time $\indTime$, and terminal reward for state $\indState \in\setState$\\*[0.08cm]
    $\disuScr, \disuTP, \disuFP$ & disutilities for screening, TP, and FP test results \\*[0.08cm]
    $\funcOptV_{\indTime}^{\indAction}(\vectBePt), \ \funcApxV_{\indTime}^{\indAction}(\vectBePt)$ & optimal and approximate value functions for belief state $\vectBePt$ at time $\indTime$\\*[0.08cm]
    $\alpha \in \Gamma$ & a piecewise linear value function and its set \\*[0.08cm]
    $\policy \in \setPolicy$ & a policy and set of policies \\*[0.08cm]
    $\parfCost_{\indState}^{\indTime\indAction}$ & cost of taking action $\indAction$ at state $\indState$ and time $\indTime$\\*[0.08cm] 
    $\parBudgetLim$ & budget limit\\*[0.08cm] 
    $\vectGrPt \in \setGrid$ & a grid point and set of grid points\\*[0.08cm]
    $\indGrid \in \setGridIndex$ & grid index in the index set for the grid points\\*[0.08cm] 
    $\beta_{\indGrid\indGridNew}^{\indTime \indAction \indObs}$ & convex combination weight for the grid point $\vectGrPt^{\indGridNew}$ for the updated belief state $(\vectGrPt^{\indGrid})'$ when the update is performed based on action $\indAction$ and observation $\indObs$ at time $\indTime$\\*[0.08cm]
    $\resoVal, \vectResoVal = [\resoVal_1,\hdots,\resoVal_n]$ & resolution value and resolution vector used for grid construction\\*[0.08cm]
    $\vectResoThreshold = [\resoThreshold_1,\hdots,\resoThreshold_n]$ & resolution threshold vector used to partition the belief space\\*[0.08cm]
    $\parfGridTr_{\indGrid \indGridNew}^{\indTime \indAction} $ & probability of transitioning from belief state $\vectGrPt^\indGridNew$ to the belief state $\vectGrPt^\indGrid$ by taking action $\indAction$ at $\indTime$ \\*[0.08cm]
    $\varDualLP_{\indTime \indGrid \indAction}$ & state-action occupancy measures, i.e., the probability of occupying belief state $\vectGrPt^\indGrid \in \setGrid$ and taking action
    $\indAction$ at time $\indTime$\\*[0.08cm]   
    $h_1, h_2$ & expected total QALYs and expected total life-time risk\\*[0.08cm] 
    $\parBeDistn_{\indGrid}$ & probability of occupying grid point $\vectGrPt^\indGrid$ at the  first decision epoch\\*[0.08cm]
    $\varDetPol_{\indTime \indGrid \indAction}$ & binary variable with value one if action $\indAction$ is taken at belief state $\vectGrPt^\indGrid \in \setGrid$ at time $\indTime$\\*[0.08cm]
    $\parfDeathCaAfterScreen_{\indTime\indState}$ & probability of death by cancer post-diagnosis for a patient diagnosed at state $\indState$ at time $\indTime$\\*[0.08cm]
    $\parDiscount$ & the discount factor \\*[0.08cm]
    $\mathbf{\objweight}=[\objweight_1,~\objweight_2]$ & the weight vector for the objective function, giving the weights for the QALYs and LBCMR terms, respectively \\*[0.08cm]
    $\setAllowedBeliefComponents(\resoVal)$ & for a fixed-resolution grid, the set of allowed values for the components of a belief state\\*[0.08cm]
    \bottomrule
    \hline
    \end{tabularx}
        }
\end{table}

As the state space is only partially observable, the POMDP agent cannot know its core state with certainty. Instead, the agent maintains a \textit{belief state}: a probability distribution over the set of core states $\setState$. The belief state $\vectBePt\in \beSimp$ is an $|\setState|$-dimensional vector where $\bePt_\indState$ gives the probability that the agent is in the state $\indState\in \setState$, that is,
\begin{equation*}
    \beSimp = \{\vectBePt \mid \textstyle\sum_{\indState \in \setState} \bePt_{\indState} = 1, \bePt_{\indState} \geq 0,~ \forall \indState \in \setState\}.
\end{equation*}
At each decision epoch $\indTime \in \setTime$, as the agent takes an action $\indAction \in \setAction$, and makes an observation $\indObs \in \setObs$, its belief state is updated from $\vectBePt$ to $\vectBePt'$ according to Bayes' rule, where
\begin{equation}\label{eq:beUpdate}
\bePt_{\indStateNew}' = \frac{\sum_\indState \bePt_\indState \parfObs^{\indTime\indAction}_{\indState \indObs} \parfTrp_{\indState \indStateNew}^{\indTime\indAction}}{\sum_{\indState, \indStateNew} \bePt_\indState \parfTrp^{\indTime\indAction}_{\indState \indStateNew} \parfObs^{\indTime\indAction}_{\indState \indObs}}
\end{equation}
gives the updated probability that the agent is in state $\indStateNew\in\setState$. The Bellman optimality equation (i.e., value function) corresponding to action $\indAction \in \setAction$ can be obtained as
\begin{equation}\label{eq:valueFunction-traditional}
\funcOptV_\indTime^{\indAction}(\vectBePt) = \sum_{\indState\in \setState} \bePt_\indState \big( \parfRew^{\indTime\indAction}_{\indState} +\parDiscount \sum_{\indObs\in \setObs} \sum_{\indStateNew\in \setState} \parfObs^{\indTime\indAction}_{\indState \indObs}\parfTrp^{\indTime\indAction}_{\indState \indStateNew} \funcOptV^*_{\indTime+1}(\vectBePt') \big),
\end{equation}
where $\funcOptV^*_{\indTime}(\vectBePt) = \max_{\indAction\in \setAction}\{\funcOptV^{\indAction}_{\indTime}(\vectBePt)\}$ and $\parDiscount$ is the discount factor. In finite horizon problems, $\parDiscount$ is typically set to one.
Note that, in Equation~\eqref{eq:valueFunction-traditional}, first the observation $\indObs$ is made at state $\indState$, then the state transition state $\indStateNew$ occurs.
This order of events is different than that of standard POMDP formulations (e.g., see~\citep{poupart2015}), however, it is commonly used in modeling cancer screening problems~\citep{ayer2012, Cevik2018}.

%

\subsection{CPOMDP Model for Breast Cancer Screening}
We formulate a CPOMDP model for the breast cancer screening problem to optimize the screening decisions made by a decision maker (e.g., a radiologist or a physician) at each age between the ages of 40 to 100 (i.e., $\setTime = \{1,2,\hdots,60\}$).
Below, we first describe the CPOMDP model components, then formally define the multi-objective CPOMDP model.

\begin{itemize}\setlength\itemsep{0.5em}
    \item \textbf{Core states}: 
    We choose the core states of the model to represent an individual's health status.
    We consider a patient to be occupying one of the following states: healthy, having early-stage (i.e., in-situ) cancer, having late-stage (i.e., invasive) cancer, death due to breast cancer, and death from other causes. 
    Our model considers the last two states as the fully observable terminal states, where the patient leaves the decision process. 
    For simplicity, we map each core state to an integer, and represent the complete core state space as $\bar{\setState} = \{0,1,2,3,4\}$, corresponding to the five states given above. 
    Similarly, the partially observable state space is defined as $\setState = \{0,1,2\}$.
    Consequently, the belief state $\vectBePt$ defined over the partially observable states is an $|\setState|$-tuple, storing the probability of a patient being healthy, having in-situ cancer, or having invasive cancer.
    
    \item \textbf{Actions and observations}: 
    We consider three screening actions in our model, namely, mammography ($\indAcMammo$), mammography supplemented with MRI ($\indAcMMRI$), and mammography supplemented with ultrasound ($\indAcMUS$). 
    We assume that the supplemental tests are performed right after mammography, and the decision to perform these test are based on the patient's current breast cancer risk estimate that reflects both static (e.g., family history) and dynamic (e.g., age) risk factors.
    We also include a wait action ($\indAcWait$) to indicate no screening is received at a given decision epoch. 
    Accordingly, the action space is $\setAction = \{\indAcWait, \indAcMammo, \indAcMMRI, \indAcMUS\}$.
    When an action is taken, the associated observation is either positive (indicating that the patient has cancer) or negative (indicating that the patient is cancer-free), which leads to the observation space of $\setObs = \{\indObsNegative, \indObsPositive\}$.
    For the wait action, a positive observation can be attributed to self-detection or a clinical breast exam (e.g., feeling a lump in the breast tissue), whereas the observations for screening actions are the direct outcomes of the screening tests.
    The received observation can be a true positive (TP), true negative (TN), false positive (FP), or false negative (FN) result, depending on the underlying state of the patient.
    
    \item \textbf{Transition and observation probabilities}: 
    Transition probabilities capture the transitions between the core states. 
    As indicated by \citet{Maillart2008}'s study, the transition probabilities for the breast cancer screening problem possess several distinguishing properties.
    Firstly, the transition probabilities are time-dependent, as the probability of developing breast cancer increases with age~\citep{Maillart2008}.
    Secondly, the transition probability matrices are upper diagonal, as a patient cannot recover from cancer during the decision process, i.e., once the patient is diagnosed with cancer, she leaves the decision process and does not return either.
    Lastly, the transition probabilities are action-independent, that is, $\parfTrp_{\indState\indStateNew}^{\indTime\indAction} = \parfTrp_{\indState\indStateNew}^{\indTime}, \ \forall \indAction \in \setAction$.
    Observation probabilities are linked to the performance (i.e., specificity and sensitivity) of the underlying action (e.g., screening test).
    Let $\textit{spec}^{\indTime \indAction}$ and $\textit{sens}^{\indTime \indAction}$ be the specificity and sensitivity parameters for action $\indAction\in \setAction$ at time $\indTime \in \setTime$, respectively.
    Accordingly, $\parfObs_{0 \indObsNegative}^{\indTime \indAction} = \textit{spec}^{\indTime \indAction}$, $\parfObs_{0 \indObsPositive}^{\indTime \indAction} = (1-\textit{spec}^{\indTime \indAction})$, $\parfObs_{i\neq 0, \indObsNegative}^{\indTime \indAction} = (1-\textit{sens}^{\indTime \indAction})$, and $\parfObs_{i\neq 0, \indObsPositive}^{\indTime \indAction} = \textit{sens}^{\indTime \indAction}$.
    
    \item \textbf{Rewards and costs}: 
    In the breast cancer screening problem, the rewards are typically expressed in terms of QALY values.
    Immediate rewards, $\parfRew_{\indState}^{\indTime \indAction}$, correspond to the QALY values associated with action $\indAction$ for a patient in state $\indState$ at time $\indTime$. 
    These are calculated by taking into account the probability of death in a given decision epoch, as well as various disutilities associated with the actions.
    Assuming that there are no disutilities associated with the wait action, its corresponding expected immediate reward can be calculated using a half-cycle correction method, that is, $\parfRew_{\indState}^{\indTime \indAcWait} = 1\times \Pr(\textit{patient is alive at $\indTime+1$}|\textit{patient is alive at $\indTime$}) + 0.5\times \Pr(\textit{patient is dead at $\indTime+1$}|\textit{patient is alive at $\indTime$})$ \citep{gray2011applied}.    
    Immediate rewards for other actions can be derived from $\parfRew_{\indState}^{\indTime \indAcWait}$.
    Let $\disuScr, \disuTP, \ \text{and} \ \disuFP$ correspond to the screening, TP test, and FP test disutilities, respectively.
    We obtain the immediate rewards for the negative test results as $\parfRew_{\indState\indObsNegative}^{\indTime \indAction} = \parfRew_{\indState}^{\indTime \indAcWait} - \disuScr$ for $\indAction \in \setAction\setminus \indAcWait$. 
    Similarly, we obtain the immediate rewards for the positive test results, $\parfRew_{\indState\indObsPositive}$, by adjusting for the TP (i.e., when $\indState\neq 0$, $\indObs = \indObsPositive$) and FP (i.e., when $\indState = 0$, $\indObs = \indObsPositive$) test disutilities.

    The disutility values are typically low for screening (e.g., 0.5 days for undergoing screening), whereas positive test results lead to additional procedures such as biopsies, incurring much higher disutility values (e.g., 2--4 weeks)~\citep{ayer2012}. 
    Terminal rewards, $\rewTerminal_{\indState}, \ \indState \in \setState$, are received when a patient reaches time step $\timeHorizon$, whereas salvage rewards, $\rewSalvage_{\indState}, \ \indState\neq \indStateHealthy$, are received when a patient is diagnosed with cancer to account for the utility of a possible recovery.
    
    We consider LBCMR as another target/reward value which corresponds to the risk of dying because of breast cancer. The salvage values for LBCMR, $\parfDeathCaAfterScreen_{\indTime\indState}$, depends on patient's age (i.e., $\indTime$) and health state, $\indState$.
    Lastly, we consider screening actions to have associated monetary costs, $\indAcMammo$ being the cheapest and $\indAcMMRI$ being the most expensive option.
    Cost values, $\parfCost^{\indAction}$ for $\indAction \in \setAction$, are independent of the state and time step.
\end{itemize}

In finite horizon CPOMDPs, the objective is to maximize the total expected reward, subject to a budget constraint. 
In our problem, we consider two separate objectives: maximization of the QALYs, and minimization of LBCMR.
For a given starting belief state $\vectBePt^0$, this can be characterized as follows:
\begin{subequations}\label{eq:cpomdp}
\begin{align}
\label{eq:cpomdpObj}\hhsp{-0.4cm}\max_{\policy \in \setPolicy}  &\ \Big\lbrace \big( E_{\vectBePt^0}^{\policy} \big[\sum_{\indTime=0}^{\timeHorizon-1} \parfRewGeneric_{\indTime}(\genericState_{\indTime}, \genericAction_{\indTime}) + \parfRewGeneric_{\timeHorizon}(\genericState_{\timeHorizon}) \big], E_{\vectBePt^0}^{\policy}\big[- \sum_{\indTime=0}^{\timeHorizon-1} \parfLTRiskGeneric_{\indTime}(\genericState_{\indTime}, \genericAction_{\indTime}) - \parfLTRiskGeneric_{\timeHorizon}(\genericState_{\timeHorizon}) \big] \big)\Big \rbrace,\\
\label{eq:cpomdpCon} \text{s.t.} & \ E_{\vectBePt^0}^{\policy} \big[\sum_{\indTime=0}^{\timeHorizon-1} \parfCost_{\indTime}(\genericState_{\indTime}, \genericAction_{\indTime}) \big] \leq \parBudgetLim,
\end{align}
\end{subequations}
where $\genericState_{\indTime}$ and $\genericAction_{\indTime}$ denote the states and actions at time ${\indTime}$, respectively; $\setPolicy$ represents the set of all policies; and $\policy \in \setPolicy$ is a policy from the policy space.
In addition, $\parfRewGeneric(\cdot)$ and $\parfLTRiskGeneric(\cdot)$ functions represent the generic reward functions for QALY and LBCMR, respectively.
Similar to \eqref{eq:valueFunction-traditional}, the two optimization objectives in \eqref{eq:cpomdpObj} can be represented using Bellman optimality equations for unconstrained POMDP problems, and are typically solved using iterative optimization routines such as value iteration and backward induction methods~\citep{puterman2014markov, Cevik2018}.
However, constrained MDP and POMDP models are usually formulated as (approximate) linear programs, the latter involving a grid-based approximation mechanism.
Accordingly, we next review the grid-based approximations for POMDPs and then formulate our linear programming model to approximate the CPOMDP model presented in \eqref{eq:cpomdp}.

\subsection{Grid-based Approximation Mechanism for CPOMDPs}
A commonly used approach to approximate $\funcOptV_{\indTime}^*(\vectBePt)$ is to discretize $\beSimp$ into a set of grid points and calculate the values for only these grid points. 
We denote this set of grid points as $\setGrid = \{\vectGrPt^\indGrid \mid \indGrid\in \setGridIndex\}$, where $\setGridIndex = \{1,\hdots, |\setGrid|\}$ is index set of $\setGrid$.
Specifically, an approximate POMDP value function can be obtained for $\grPt \in \setGrid$ and $\indAction \in \setAction$ as~\citep{lovejoy1991computationally}:
\begin{equation}
\label{eq:q_calculation} \funcApxV_{\indTime}^{\indAction}(\vectGrPt) = \sum_{\indState \in \setState} \grPt_{\indState} \parfRew_{\indState}^{\indTime \indAction} +  \sum_{\indState \in \setState} \grPt_{\indState} \sum_{\indObs \in \setObs} \sum_{\indStateNew \in \setState} \parfObs_{\indState \indObs}^{\indTime \indAction} \ \parfTrp_{\indState \indStateNew}^{\indTime \indAction}\ {\sum_{\indGrid \in \setGridIndex} \beta_{\indGrid} \funcApxV_{\indTime+1}(\vectGrPt^{\indGrid})},
\end{equation}
where $\funcOptV_{\indTime+1}^*(\ugrPtvect) \approx {\sum_{\indGrid \in \setGridIndex} \beta_{\indGrid} \funcApxV_{\indTime+1}(\vectGrPt^{\indGrid})}$. In this approximation, $\ugrPtvect\not\in\setGrid$ is represented as a convex combination of the elements in $\setGrid$, in which $\beta_k$ gives the coefficient of $\vectGrPt^{\indGrid}, \ \indGrid \in \setGridIndex$.
There are several different strategies for calculating these $\beta$-values; in this paper, we employ the LP-based approach provided by \citet{sandikcci2010reduction}.

\subsubsection{Grid construction}
The grid set $\setGrid$ can have a significant impact on the approximation quality, as was shown by \citet{sandikci2018screening} in the case of an unconstrained POMDP model for another breast cancer screening problem.
In uniform grid construction methods, a resolution value, $\resoVal$, is used to construct the grid set. 
For a uniform resolution grid, the distance between grid points in each dimension is integer multiples of $1/\resoVal$. 
Thus, the possible values in the grid point $\vectGrPt \in \setGrid$ for each state $\indState \in \setState$ are integer multiples of $1/\resoVal$. 
Formally, the grid set constructed by using a uniform grid approach can be obtained as~\citep{lovejoy1991computationally}
\begin{equation}\label{eq:uniform-grid}
    \setGrid=\left\{\bePt \mid \bePt_\indState=\frac{n}{\resoVal},~n\in \mathbb{Z}_{\geq 0},~\sum_\indState \bePt_\indState = 1\right\}
\end{equation}
where $\mathbb{Z}_{\geq 0}$ is the set of all integers greater than or equal to zero. 
For a problem with $|\setState|$ states and a resolution of $\resoVal$, the size of the approximate grid set is $|\setGrid|=\binom{|\setState|-1+\resoVal}{\resoVal}$.
A detailed explanation of uniform resolution grid construction and sample grid sets can be found in Appendix~\ref{sec:appendix-fixed-resolution-grid-construction}.

For the breast cancer screening problem, the patients are likely to have belief states that are much closer to the corner point $\vectBePt=[1, 0, 0]$ (healthy) than to the other two corner points $\vectBePt=[0, 1, 0]$ (in-situ cancer) and $\vectBePt=[0, 0, 1]$ (invasive cancer). 
This is mainly because the probability of getting cancer on a specific time step is relatively low, as indicated by the transition probability values. 
As a result, using a uniform resolution grid to approximate the belief space would lead to a significant amount of computational overhead to be spent on belief states that are rarely visited, which is noted in previous studies as well~\citep{Cevik2018, sandikci2018screening}.
Accordingly, we employ a variable resolution uniform grid construction method for generating the grid points.
In this approach, the resolution value varies based on a set of thresholds over the belief state components. 
Specifically, the belief space is divided into distinct intervals based on the probability that a patient is healthy, $\bePt_0$ and a resolution is assigned to each interval. 
For an interval $\pi_0\in[a, b]$ with resolution $\resoVal$, the uniform grid for the resolution $\resoVal$ is constructed, and all beliefs satisfying $a\leq\pi_0\leq b$ are retained. 
Formally, threshold values $\vectResoThreshold=[\resoThreshold_1,\hdots,\resoThreshold_n]$ are defined over the probability that a woman is healthy, as specified by the first component of belief states, $\bePt_0$. 
Specific resolution values $\vectResoVal=[\resoVal_1,\hdots,\resoVal_n]$ are assigned for the intervals defined by these thresholds. 
The variable resolution grid is then constructed by generating the corresponding fixed-resolution grid sets and drawing from the relevant threshold regions~\citep{Cevik2018, sandikci2018screening}.
A detailed explanation of variable resolution grid construction and sample grid sets can be found in Appendix~\ref{sec:appendix-variable-resolution-grid-construction}.


\subsubsection{LP model for multi-objective CPOMDP}
The grid-based approximation that we describe above reduces a POMDP to an approximate MDP~\citep{sandikcci2010reduction}.
That is, the approximate model that is characterized by the value functions in \eqref{eq:q_calculation} corresponds to an MDP model where $\vectGrPt \in \setGrid$ are the states.
The transition probabilities between the states (i.e., grid points) can be obtained from the POMDP belief states, observation probabilities, transition probabilities, and $\beta$-values as follows:
\begin{align*}
\parfGridTr_{\indGrid \indGridNew}^{\indTime \indAction} =  
\begin{cases}
\displaystyle \sum_{\indObs \in \setObs} \sum_{\indState \in \setState} \sum_{\indStateNew \in \setState} {\beta_{\indGrid\indGridNew}^{\indTime \indAction \indObs}}\
\grPt_{\indState}^{\indGrid} \parfObs_{\indState \indObs}^{\indTime \indAction} \ \parfTrp_{\indState \indStateNew}^{\indTime \indAction} & \quad \text{if} \ \indAction = \indAcWait, \indObs \in \setObs, \ \text{or} \ \indAction \neq \indAcWait, \indObs = \indObsNegative, \\[0.5em]
\displaystyle \sum_{\indObs\in \setObs} \sum_{\indStateNew \in \setState} {\beta_{\indGrid\indGridNew}^{\indTime \indAction \indObs}}\
\grPt_{\indStateHealthy}^{\indGrid} \parfObs_{\indStateHealthy \indObs}^{\indTime \indAction} \ \parfTrp_{\indStateHealthy \indStateNew}^{\indTime \indAction} & \quad \text{if} \ \indAction \neq \indAcWait \ \text{and} \ \indObs = \indObsPositive,
\end{cases}
\end{align*}
where $\beta_{\indGrid\indGridNew}^{\indTime \indAction \indObs}$ corresponds to the convex combination weight for the grid point $\vectGrPt^{\indGridNew}$ for the updated belief state $(\vectGrPt^{\indGrid})'$ when the update is performed based on action $\indAction$ and observation $\indObs$ at time $\indTime$. Note that $\parfGridTr$ values can be precalculated and stored for later usage.

By using a similar approximation mechanism, we can reduce the CPOMDP defined by \eqref{eq:cpomdp} to an approximate constrained MDP. 
Let $\varDualLP_{\indTime \indGrid \indAction}$ represent the \textit{occupancy measures}, which specify the fraction of time that the patient occupies the belief state $\vectGrPt^{\indGrid} \in \setGrid$ at time $\indTime \in \setTime$ and takes action $\indAction \in \setAction$.
We can formulate the following LP model for the approximate MDP:
\begin{subequations}\label{eq:dualLP}
\begin{align}
\label{eq:dualLPObj} \max_{\varDualLP} \ \ & h(\varDualLP):=(h_1(\varDualLP), h_2(\varDualLP))\\
\label{eq:dualLP_eqn1} \sthat \ \ & \sum_{\indAction \in \setAction} {\varDualLP_{0 \indGrid \indAction}} = {\parBeDistn_{\indGrid}}, && \hspace{-0.95cm} \indGrid\in \setGridIndex,\\[2.2ex]
\label{eq:dualLP_eqn2} \ & \sum_{\indAction \in \setAction}{\varDualLP_{\indTime \indGrid \indAction}} - \sum_{\indAction \in \setAction} \sum_{\indGridNew \in \setGridIndex} { \parfGridTr_{\indGridNew \indGrid}^{\indTime-1 \indAction}} {\varDualLP_{\indTime-1 \indGridNew \indAction}} = 0, && \hspace{-0.95cm}  \indGrid\in \setGridIndex, \  0 < \indTime < \timeHorizon,\\[2.2ex]
\label{eq:dualLP_eqn3} \ & {\varDualLP_{\timeHorizon \indGrid}} - \sum_{\indAction \in \setAction} \sum_{\indGridNew \in \setGridIndex} { \parfGridTr_{\indGridNew \indGrid}^{\timeHorizon-1 \indAction}} {\varDualLP_{\timeHorizon-1 \indGridNew \indAction}} = 0, && \hspace{-0.95cm}  \indGrid\in \setGridIndex,\\[2.2ex]
\label{eq:dualLP_eqnBudget}\ & \sum_{\indTime < \timeHorizon} \sum_{\indGrid \in \setGridIndex} \sum_{\indAction\in \setAction} \sum_{\indState \in \setState} \grPt_{\indState}^{\indGrid} \parfCost_{\indState}^{\indTime\indAction} \varDualLP_{\indTime \indGrid \indAction} \leq \parBudgetLim,\\[2.2ex]
\label{eq:dualLP_eqn4} \ & {\varDualLP_{\timeHorizon \indGrid}} \geq 0, \qquad {\varDualLP_{\indTime \indGrid \indAction}} \geq 0, && \hspace{-0.95cm} \indAction\in \setAction, \  \indGrid\in \setGridIndex, \  \indTime < \timeHorizon.
\end{align}
\end{subequations}

\noindent Let $\mathcal{X}$ represent the feasible solution space for the approximate CPOMDP model, as defined by the constraints~\eqref{eq:dualLP_eqn1}-\eqref{eq:dualLP_eqn4}.
Then, $h:\mathcal{X} \rightarrow \mathbb{R}^2$ maps each solution in $\varDualLP \in \mathcal{X}$ into the 2-dimensional objective vector $y:=(h_1(\varDualLP), h_2(\varDualLP))$, with $h_k: \mathcal{X} \rightarrow \mathbb{R},\ k \in \{1,2\}$.
The two optimization objectives in this model, $h_1$ and $h_2$ correspond to expected total QALYs and expected LBCMR, respectively.
The constraints specified by~\eqref{eq:dualLP_eqn1} link the occupancy measures to the initial belief distribution ${\parBeDistn_{\indGrid}}, \ \indGrid\in \setGridIndex$ over the grid points.
The initial belief distribution can be selected in a way that $\sum_{\indGrid \in \setGridIndex} \parBeDistn_{\indGrid} \grPt^{\indGrid} = \vectBePt^{0}$. 
The constraints in~\eqref{eq:dualLP_eqn2} and \eqref{eq:dualLP_eqn3} establish the connection between the occupancy measures at successive decision epochs.
Constraint~\eqref{eq:dualLP_eqnBudget} imposes a budget limit of $\parBudgetLim$ over the decision horizon.
That is, the expected total cost of the taken actions cannot exceed $\parBudgetLim$.

QALY and LBCMR objectives can be formulated, respectively, as follows:
\begin{align}
\nonumber h_1(\varDualLP) \ &= \sum_{\indTime < \timeHorizon}\sum_{\indAction\in \setAction}\sum_{\indGrid \in \setGridIndex}\sum_{\indState \in \setState} \grPt_{\indState}^{\indGrid} \parfRew_{\indState}^{\indTime \indAction} {\varDualLP_{\indTime \indGrid \indAction}} + \sum_{\indTime < \timeHorizon} \sum_{\indAction\neq \indAcWait, \indAction \in \setAction} \sum_{\indGrid \in \setGridIndex}\sum_{\indState \neq \indStateHealthy, \indState \in \setState}  \grPt_{\indState}^{\indGrid} \parfObs_{\indState\indObsPositive}^{\indTime\indAction} \rewSalvage_{\indState}^{\indTime} {\varDualLP_{\indTime \indGrid \indAction}}\\ 
\label{eq:CPOMDPQALYObj}\ &+ \sum_{\indGrid \in \setGridIndex}\sum_{\indState \in \setState} \grPt_{\indState}^{\indGrid} \rewTerminal_{\indState}{\varDualLP_{\timeHorizon\indGrid}}, \\[1.2ex]
\nonumber h_2(\varDualLP) \ &= \sum_{\indTime < \timeHorizon} \sum_{\indGrid \in \setGridIndex} \sum_{\indAction\neq\indAcWait} \sum_{\indState \neq \indStateHealthy} \grPt_{\indState}^{\indGrid} \parfObs_{\indState\indObsPositive}^{\indTime\indAction}
\parfDeathCaAfterScreen_{\indTime\indState}
{\varDualLP_{\indTime \indGrid \indAction}} 
+ \sum_{\indTime < \timeHorizon} \sum_{\indGrid \in \setGridIndex} \sum_{\indAction\neq \indAcWait} \grPt_{\indStateLocalized}^{\indGrid} \parfObs_{\indStateLocalized\indObsNegative}^{\indTime\indAction}
\parfTrp_{\indStateLocalized\indStateRegional}^{\indTime\indAction}
{\varDualLP_{\indTime \indGrid \indAction}}\\ 
\label{eq:CPOMDPRiskObj}\ & + \sum_{\indTime < \timeHorizon} \sum_{\indGrid \in \setGridIndex} \sum_{\indObs \in \setObs} \grPt_{\indStateLocalized}^{\indGrid} \parfObs_{\indStateLocalized\indObs}^{\indTime\indAcWait}
\parfTrp_{\indStateLocalized\indStateRegional}^{\indTime\indAcWait}
{\varDualLP_{\indTime \indGrid \indAcWait}}.
\end{align}
The first component in \eqref{eq:CPOMDPQALYObj} corresponds to the expected total QALYs except for the case the patient is diagnosed with cancer, whereas the second component calculates the post-diagnosis QALYs.
The final component accounts for terminal rewards.
In \eqref{eq:CPOMDPRiskObj}, the first component calculates the LBCMR post-diagnosis where $\parfDeathCaAfterScreen_{\indTime\indState}$ represents the probability of a patient dying due to cancer if she is diagnosed at time $\indTime$ in state $\indState$.
The second component is for the case when the patient receives a screening test but the outcome is a false negative.
The final component corresponds to the LBCMR value when the patient takes the wait action.
The straightforward approach is to linearize these two objectives by using appropriately selected weight values, $(\objweight_1, \objweight_2)$, to obtain $\objweight_1 h_1 - \objweight_2 h_2$ as the optimization objective.
However, we also experiment with multi-objective optimization techniques to obtain the Pareto frontier.

The model presented in \eqref{eq:dualLP} does not guarantee that the resulting screening policy will be deterministic \citep{puterman2014markov}. Instead, the optimal policies may be \textit{randomized}: there may be more than one optimal action for a given grid point-time tuple. 
In circumstances where randomized policies are unacceptable, which is often the case in medical diagnosis, we can enforce deterministic policy constraints. 
We define binary variables $\varDetPol_{\indTime \indGrid \indAction}$ and incorporate the following constraints into \eqref{eq:dualLP} to ensure deterministic policies:
\begin{subequations}
\begin{align}
\ & {{\varDualLP_{\indTime \indGrid \indAction}} \leq {\varDetPol_{\indTime \indGrid \indAction}}}, && {\indTime < \timeHorizon, \ \ \indGrid\in \setGridIndex, \ \ \indAction\in \setAction},\label{eq:detforce-a}\\[1.4ex]
\ & {\sum_{\indAction\in \setAction} {\varDetPol_{\indTime \indGrid \indAction}} = 1}, \quad && {\indTime < \timeHorizon, \ \ \indGrid\in \setGridIndex},\label{eq:detforce-b}\\[1.4ex]
\ & {{\varDetPol_{\indTime \indGrid \indAction}} \in \{0,1\}}, && {\indTime < \timeHorizon, \ \ \indGrid\in \setGridIndex, \ \ \indAction\in \setAction}.\label{eq:detforce-c}
\end{align}
\end{subequations}
These constraints ensure that for each grid point-time tuple, exactly one action is recommended. \citet{Cevik2018} also propose constraints to obtain threshold-type policies, which can be useful for clinical practice.
The threshold-type policies rely on the stochastic ordering of the belief states, and ensure that policies follow the threshold levels over the belief states (e.g., take wait action if the patient is at least 99\% healthy -- $\vectBePt = [0.9954, 0.0016, 0.003]$).
However, these constraints come with an extra computational burden, and we, therefore, do not consider them in our analysis.

The approximate model defined in \eqref{eq:dualLP} can be simplified significantly by only considering the \textit{useful} grid points and eliminating those from the formulation that are guaranteed to result in $\varDualLP_{\indTime \indGrid \indAction} = 0, \ \forall \indGrid \in \setGrid, \ \forall \indAction \in \setAction$ at a given time $\indTime$. 
Specifically, some grid points are not visited in certain decision epochs, and they do not contribute to the optimization objectives. This makes it possible to set the corresponding $\varDualLP$-variable value to 0.
The following proposition formalizes this idea:
\begin{proposition}
\label{prop:unused-grids}
If at time $t\in\{1,\ldots,\timeHorizon-1\}$ and grid index $k\in\setGridIndex$ we have \begin{equation*}
    \parfGridTr_{\indGridNew \indGrid}^{\indTime-1 \indAction}\varDualLP_{\indTime \indGrid \indAction}=0,\qquad \forall \indAction \in \setAction,~\indGridNew \in \setGridIndex
\end{equation*}
then, for all $\indAction \in \setAction$, $\varDualLP_{\indTime \indGrid \indAction}=0$.
\end{proposition}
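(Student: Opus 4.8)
The plan is to derive the claim directly from the flow-balance constraint \eqref{eq:dualLP_eqn2}, using nothing beyond that equality and the sign restrictions \eqref{eq:dualLP_eqn4}. Because the hypothesis fixes $\indTime\in\{1,\ldots,\timeHorizon-1\}$, we are in the interior regime $0<\indTime<\timeHorizon$, so \eqref{eq:dualLP_eqn2} is the conservation law governing grid point $\indGrid$ at epoch $\indTime$ (as opposed to the initial-epoch balance \eqref{eq:dualLP_eqn1} or the terminal balance \eqref{eq:dualLP_eqn3}, which do not apply here). No optimality, duality, or budget reasoning is needed; this is purely a statement about feasible occupancy measures.

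First I would instantiate \eqref{eq:dualLP_eqn2} at the given index $\indGrid$, which reads
\[
\sum_{\indAction \in \setAction} \varDualLP_{\indTime \indGrid \indAction} \;=\; \sum_{\indAction \in \setAction} \sum_{\indGridNew \in \setGridIndex} \parfGridTr_{\indGridNew \indGrid}^{\indTime-1\,\indAction}\, \varDualLP_{\indTime-1\,\indGridNew\,\indAction},
\]
so that the total occupancy deposited on grid point $\indGrid$ at epoch $\indTime$ equals the aggregate probability inflow into $\indGrid$ from the preceding epoch. Each summand of the right-hand side is a product of a grid-transition weight $\parfGridTr_{\indGridNew \indGrid}^{\indTime-1\,\indAction}$ with an occupancy measure, and the hypothesis asserts that every one of these products is zero. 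Hence the entire inflow vanishes, which forces $\sum_{\indAction \in \setAction} \varDualLP_{\indTime \indGrid \indAction}=0$.

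The concluding step invokes the nonnegativity constraints \eqref{eq:dualLP_eqn4}: each $\varDualLP_{\indTime \indGrid \indAction}\ge 0$, and a finite sum of nonnegative numbers can equal zero only if every term is itself zero. Therefore $\varDualLP_{\indTime \indGrid \indAction}=0$ for all $\indAction\in\setAction$, as claimed. I do not anticipate a substantive obstacle; the argument is a one-line consequence of conservation together with sign constraints. The only point requiring attention is index bookkeeping, namely confirming that the products annihilated by the hypothesis are exactly the summands constituting the inflow term of \eqref{eq:dualLP_eqn2} for this particular $(\indTime,\indGrid)$ --- that the source grids $\indGridNew$ and the epoch $\indTime-1$ align term-for-term --- and that restricting to $\indTime\le \timeHorizon-1$ keeps us clear of the terminal balance \eqref{eq:dualLP_eqn3}, whose left-hand side takes the different form $\varDualLP_{\timeHorizon \indGrid}$. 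Once this matching is verified, no estimation or case analysis remains.
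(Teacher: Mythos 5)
Your proof is correct and is essentially identical to the paper's: both instantiate the flow-balance constraint \eqref{eq:dualLP_eqn2} at the pair $(\indTime,\indGrid)$, use the hypothesis to annihilate every inflow summand on the right-hand side, and then conclude from the nonnegativity constraints \eqref{eq:dualLP_eqn4} that a vanishing sum of nonnegative terms forces $\varDualLP_{\indTime \indGrid \indAction}=0$ for every $\indAction\in\setAction$. On the index bookkeeping you rightly flagged: the hypothesis as printed contains the factor $\varDualLP_{\indTime \indGrid \indAction}$ rather than the inflow factor $\varDualLP_{\indTime-1\,\indGridNew\,\indAction}$, which is evidently a typo in the statement, since the term-for-term matching your argument (and the paper's own proof) relies on applies the hypothesis to the products $\parfGridTr_{\indGridNew \indGrid}^{\indTime-1\,\indAction}\varDualLP_{\indTime-1\,\indGridNew\,\indAction}$ appearing in \eqref{eq:dualLP_eqn2}.
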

\begin{proof}
\begin{align*}
\intertext{From \eqref{eq:dualLP_eqn2} we have}
\sum_{\indAction \in \setAction}{\varDualLP_{\indTime \indGrid \indAction}} &= \sum_{\indAction \in \setAction} \sum_{\indGridNew \in \setGridIndex} { \parfGridTr_{\indGridNew \indGrid}^{\indTime-1 \indAction}} {\varDualLP_{\indTime-1 \indGridNew \indAction}} 
\intertext{by Proposition~\ref{prop:unused-grids}, this becomes}
\sum_{\indAction \in \setAction}{\varDualLP_{\indTime \indGrid \indAction}}&=0\\
\intertext{from \eqref{eq:dualLP_eqn4}, which states}
\varDualLP_{\indTime \indGrid \indAction}&\geq0\\
\intertext{the only solution is}
\varDualLP_{\indTime \indGrid \indAction}&=0,\qquad \indAction\in\setAction\tag*{\qed}
\end{align*}
\end{proof}

We take that the grid point $\indGrid$ is not useful at time $\indTime$ if it satisfies the condition in Proposition~\ref{prop:unused-grids}. 
In the model defined by \eqref{eq:dualLP}, for each unuseful grid point, we can omit the $|\setAction|$ many corresponding $\varDualLP_{\indTime\indGrid\indAction}$ variables and their associated constraints given in \eqref{eq:dualLP_eqn2}. 
In the deterministic model, we can additionally omit $|\setAction|$ many binary $\varDetPol_{\indTime \indGrid \indAction}$ variables, as well as the two corresponding deterministic policy constraints given by \eqref{eq:detforce-a} and~\eqref{eq:detforce-b}.

\section{Numerical Study}\label{sec:Results}
In this section, we discuss the parameter estimation procedures and present the results from our numerical study.  
Through detailed experiments, we first assess the performance of the grid-based approximations for the unconstrained POMDP model of the breast cancer screening problem.
Then, we present results for the multi-objective POMDP and CPOMDP models.
The optimization models are implemented in Python using Gurobi 9.0 and experiments are performed on a Linux machine with an Intel i7-8700K processor and 64 GB of system RAM.

\subsection{Parameter Estimation}\label{sec:parameter_estimation}
An important input for our models is the disutility of various screening and diagnostic outcomes. 
We consider the following disutility values: 0.5 days for mammography, 2.5 days for mammography+MRI, 1 day for mammography+ultrasound, 14 days for a TP and 28 days for a FP, as FP has been found to have a higher disutility than TP~\citep{sandikci2018screening, ayer2012}.
Furthermore, we set the terminal rewards for a patient at age 100, $\rewTerminal_{\indState}$, to 2.5, 1.2, and 0.5 years for being healthy, having in-situ cancer, and having invasive cancer, respectively~\citep{ayer2012}.
We note that there could be many factors that contribute to an increased likelihood of developing breast cancer, which need to be considered in categorizing patients into risk groups. 
The factors that are found important for increased breast cancer risk include breast density, family history, and BRCA1 and BRCA2 gene mutations~\citep{acs2019early}. Accordingly, we group patients under one of two categories: average-risk (AR) and high-risk (HR). 
Similar patient groupings were also considered in previous studies.
For instance, \citet{ayer2012} define an HR patient as someone with a family history of breast cancer.
Overall, similar to \citet{Cevik2018}, HR patients are assumed to be 2--4 times more likely to develop breast cancer compared to their AR counterparts. 
The risk of developing breast cancer is mainly encoded by the transition probability matrices: women classified as high-risk patients are more likely to transition to unhealthy states than women who are average-risk. 
The corresponding transition probabilities for AR and HR patients are obtained from the studies of \citet{Maillart2008} and \citet{Cevik2018}, and the observation probabilities (i.e., sensitivity and specificity values for each action) are obtained from the studies of \citet{ayer2012} and \citet{sandikci2018screening}.

In terms of test performance, mammography + MRI has the highest sensitivity and the lowest specificity, whereas mammography + ultrasound has the highest specificity among these three screening modalities.
Post-cancer life expectancy values are also estimated similarly to the estimations in \citet{ayer2012}, which are based on the methods by \citet{Arias2010} and \citet{Siegel2014}. 
The cost of each screening action follows the estimates from \citet{ccauglayan2018assess}, adjusted at a 3\% rate of inflation and rounded to the nearest dollar to get the cost in 2022. 
After adjusting for inflation, the cost of mammography, mammography + MRI, and mammography + ultrasound are estimated as \$134, \$1,752, and \$243, respectively. 
In the numerical experiments with constrained POMDP models, the budget levels are set as \$350, \$850, and \$1700, for small, average and large budgets.
We note that these budget constraints represent the limits on average amount of money spent on cancer screening procedures throughout an individual patient's lifetime. 
To ensure that the weighted multi-objective optimization responds properly to both objective terms, the scale factors $\scaleFactor_1$ and $\scaleFactor_2$ are assigned such that the scaled values of QALYs and LBCMR are approximately within $[0, 1]$. The results in \citet{Cevik2018} and \citet{sandikci2018screening} reveal that QALY values are generally below or just larger than 40. Accordingly, $\scaleFactor_1$ is set to $40^{-1}$. A preliminary analysis revealed that LBCMR generally does not exceed 10\%. Accordingly, we set $\scaleFactor_2$ to 10. 
The starting belief states for AR and HR patients can be estimated by using the breast cancer risk assessment tool provided on NCI's website\footnote{http://www.cancer.gov/bcrisktool/} and they typically differ considerably based on the patient characteristics, with HR patients having significantly worse belief states at a given age than AR patients in most cases~\citep{ayer2012, sandikci2018screening}.

\subsection{Performance Evaluation for Solution Algorithms}
\label{sec:performance_evaluation}
The performance of the grid-based approximation can be significantly impacted by the grid set composition.
Typically, as the grid size increases, policies generated by the approximation methods get closer to the ones generated by the exact solution methods (e.g., Monahan's exhaustive enumeration algorithm).
Accordingly, we first examine the performance of the grid-based approximations by using \citet{lovejoy1991computationally}'s lower bound (LB) and upper bound (UB) mechanisms over the unconstrained POMDP model for our breast cancer screening problem. 
Note that the latter provides the foundations of our CPOMDP model provided in \eqref{eq:dualLP}.
We adopt the implementations provided by \citet{kavaklioglu2022scalable} for the LB and UB methods.

Table~\ref{table:gap_table} summarizes the performance for different grid sets based on an AR patient and by using the single objective model obtained by setting $\objweight_1=1$ and $\objweight_2=0$ (i.e., QALY maximization objective).
We provide two sets of results, one for a single belief state, $\vectBePt = [0.9954, 0.0016, 0.003]$, and the other is optimality gap statistics over 100 belief states randomly sampled according to the same resolution threshold vector ($\vectResoThreshold = [0.96, 0.80, 0]$) used for constructing all the grid sets.
The optimality gap value for a specific belief state is calculated as $\text{Gap}~(\%) = (\text{UB}-\text{LB})/|\text{LB}|$.

These results show that the gap values may decrease considerably as we increase the number of grid points in the variable resolution uniform grid construction method.
In terms of CPU run times, the UB method is significantly worse than the LB method and Monahan's algorithm (ME).
Note that both the LB method and ME rely on the piecewise linear value function representation for POMDPs (i.e., $\funcOptV^*(\vectBePt) = \max_{\boldsymbol{\alpha} \in \Gamma} \{\vectBePt \cdot \boldsymbol{\alpha}\}$ where $\Gamma$ represents the set of $\alpha$ vectors), and effectively employ $\alpha$-vectors to represent the model solutions.
The LB method achieves similar values to those generated using ME even for small grid sets.
On the other hand, the UB method repeatedly calculates $\beta$-values to achieve a reduction of the POMDP to a corresponding MDP, which leads to a significant increase in computational overhead.
While the gap significantly tightens as we approach $\setGrid_3$, there is a negligible difference between the results obtained for $\setGrid_3$ and $\setGrid_4$.
As such, we set $\setGrid_3$, which uses $\vectResoVal = [500, 50, 5]$ and $\vectResoThreshold = [0.96, 0.8, 0]$, as our \textit{default} grid set in the subsequent experiments.

\begin{table}[!ht]
    \centering
    \caption{Summary performance values for different unconstrained POMDP algorithms obtained for an AR patient using different grid sets.}
    \label{table:gap_table}
    \resizebox{\columnwidth}{!}{
    
        \begin{tabular}{lrrrllrrrrr}
            \toprule
            \multicolumn{2}{c}{Grid Specification} & \multicolumn{2}{c}{LB} & \multicolumn{2}{c}{ME} & \multicolumn{2}{c}{UB} & \multicolumn{3}{c}{Gap (\%)} \\
            \cmidrule(lr){1-2}\cmidrule(lr){3-4}\cmidrule(lr){5-6}\cmidrule(lr){7-8}\cmidrule(lr){9-11}
            Grid ID and $\vectResoVal$ & $|\setGrid|$ &   QALYs &    CPU &    QALYs &    CPU &   QALYs &     CPU &   Min &  Mean &  Max \\
            \midrule
 $\setGrid_1$: (100, 25, 5) &    51 & 40.244 &    5.9 &      40.252 &  1669.9 & 40.338 &    71.0 &   0.0 & 0.082 & 0.215 \\
 $\setGrid_2$: (250, 50, 5) &   144 & 40.250 &   31.8 &       - &       - & 40.270 &   511.7 &   0.0 & 0.016 & 0.047 \\
 $\setGrid_3$: (500, 50, 5) &   309 & 40.252 &  214.6 &  - & - & 40.255 &  2118.5 &   0.0 & 0.003 & 0.009 \\
$\setGrid_4$: (1000, 50, 5) &   939 & 40.252 & 5476.4 &       - &       - & 40.252 & 19231.0 &   0.0 & 0.001 & 0.004 \\
            \bottomrule
        \end{tabular}
    }
    \begin{tablenotes}
      \footnotesize
      \item $^{*}$: QALY values are for the target belief state $\vectBePt = [0.9954, 0.0016, 0.003]$
      \item $^{\dagger}$: resolution threshold vector is $\vectResoThreshold = [0.96, 0.8, 0]$
      \item $^{\ddagger}$: gap values are calculated over 100 randomly sampled evaluation grid points
    \end{tablenotes}
\end{table}

We next demonstrate the impact of Proposition~\ref{prop:unused-grids} for various grid set sizes.
Specifically, we consider the unconstrained POMDP model by using the corresponding integer programming model as formulated by~\eqref{eq:dualLP} with the single objective $h_1$, including the deterministic policy constraints, but excluding the budget constraints. 
Because our multi-objective CPOMDP models are constructed based on a similar formulation, this analysis sheds light on the corresponding run times as well.
Figure~\ref{fig:used_gridpoints_number} provides the percentage of useful grids at each decision epoch, and Figure~\ref{fig:used_gridpoints_time} compares the CPU run times on a logarithmic scale between the two configurations: using all of the grid points and using only the useful grid points. 
At the first decision epoch, as specified by the choice of $\parBeDistn$, all the grid points are used.
However, in the subsequent decision epochs, only a fraction of grid points are used/visited.
We also find that the percentage of useful grid points decreases with $|\setGrid|$. 
Across all the grid set sizes, we observe a significant performance boost in terms of run times, which tends to increase as $|\setGrid|$ increases.

\begin{figure}[!ht]
	\centering
	%
	%
	\subfloat[\% of grid points used \label{fig:used_gridpoints_number}]{\includegraphics[width=0.5\textwidth]{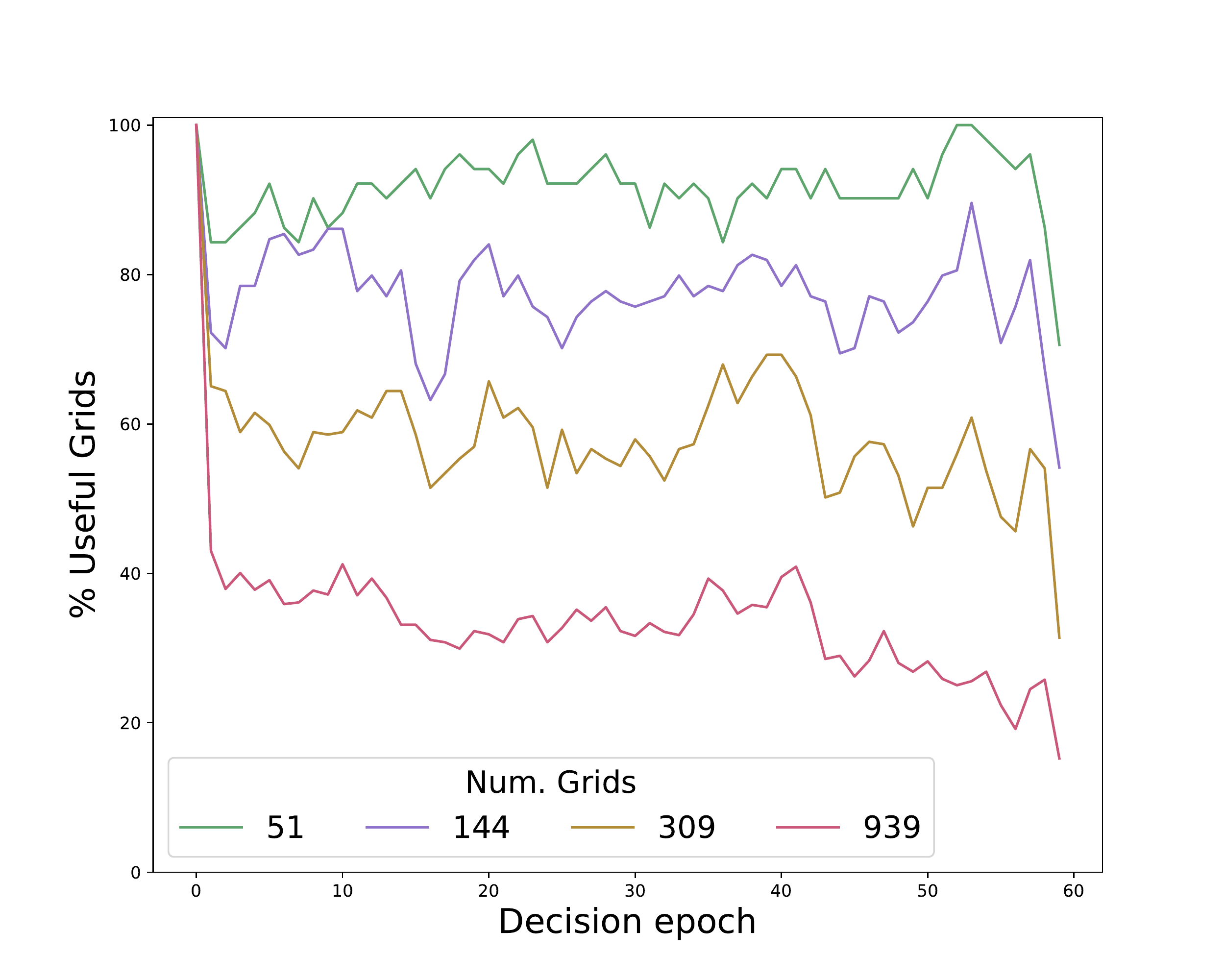}}\hfill
	\subfloat[CPU run time comparison \label{fig:used_gridpoints_time}]{\includegraphics[width=0.5\textwidth]{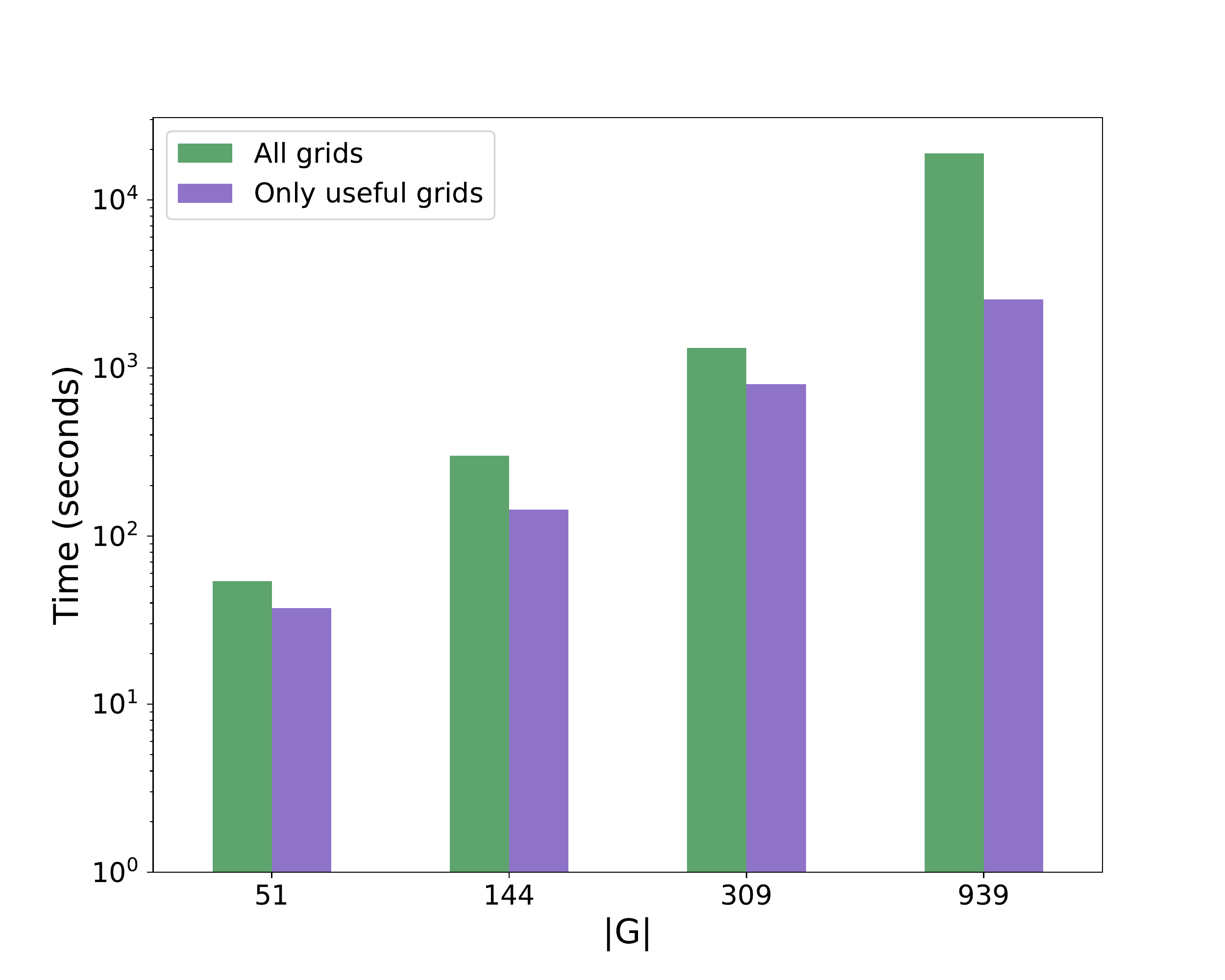}}
	\caption{Grid point elimination analysis results for unconstrained POMDP model solved using the corresponding integer programming formulation.}
	\label{fig:used_gridpoints}
\end{figure}

\subsection{Multi-objective CPOMDP Results}
\label{sec:mop_results}
In our experiments with multi-objective models, we first assess the trade-offs between the optimization objectives.
Specifically, three representative weight combinations are considered to test the multi-objective optimization model.
Intuitively, both extremes are considered: the QALY maximization objective, obtained by setting $\objweight = [1,~0]$, as well as the LBCMR minimization objective, obtained by setting  $\objweight = [0,~1]$.
The final weight vector involves a combination of maximizing QALYs while minimizing LBCMR, which is referred to as the \textit{weighted} objective. 
Our preliminary analysis indicates that an arbitratily selected weight combination can perform similarly to QALY maximization or LBCMR minimization objectives. 
We identify $\objweight = [0.933,~0.067]$ as the weight combination that leads to noticeably different QALYs and LBCMR values than the single objective counterparts.
Below, after establishing baseline results for our multi-objective CPOMDP model by solving its unconstrained version (i.e., without budget restrictions), we investigate the impact of the budget constraints on the resulting policies. 
We then provide a sensitivity analysis for important model parameters such as the disutility of supplemental screenings and the screening costs.

Table~\ref{tab:MOP_UB_ress} shows the baseline results for the unconstrained breast cancer screening problem considering AR and HR patients.
For each patient type, we consider five different belief states to report the QALYs and LBCMR values. 
These belief states only specify the starting health status estimates of the patients, and some extreme cases are included (e.g., see the belief state in the last row of Table~\ref{tab:MOB_UB_table_AR}) to assess the impact of the starting belief state on screening outcomes.
We find that there is a noticeable trade-off between the QALY maximization policies (i.e., the ones obtained by solving the CPOMP model using only QALY maximization objective) and LBCMR minimization policies (i.e., the ones obtained by solving the CPOMP model using only LBCMR minimization objective).
Specifically, across different belief states for AR patients, we observe that aggressive screening strategies lead to a decrease in LBCMR by approximately 0.51\%, but these strategies also result in a drop of 0.36 QALYs (4.3 months).
This can be explained by the various disutilities (e.g., FP test results) associated with screening.
We observe similar trends for HR patients as well.
As expected, HR patients experience lower QALYs and a higher LBCMR due to the elevated likelihood of developing breast cancer.
On average, HR patients have a slightly lower QALY loss (3.7 months) and a larger LBCMR reduction (0.58\%) when compared to AR patients.
This is mainly because the disutilities associated with screening are suppressed by the increased benefit of screening in terms of breast cancer mortality reduction, which leads to screenings having a more positive impact on QALYs for HR patients.

\begin{table}[!htb]
\centering
\caption{Multi-objective POMDP results obtained for fixed objective weights (analysis exclude budget limits; LBCMR values are in \%).}
\label{tab:MOP_UB_ress}
    \subfloat[AR patient\label{tab:MOB_UB_table_AR}]{
    \resizebox{0.959\textwidth}{!}{
    
    \begin{tabular}{crrrrrr}
        \toprule
        & \multicolumn{2}{c}{Min. LBCMR} & \multicolumn{2}{c}{Weighted} & \multicolumn{2}{c}{Max. QALYs} \\
        \cmidrule(lr){2-3}\cmidrule(lr){4-5}\cmidrule(lr){6-7}
        Belief state &                    QALYs & LBCMR &                    QALYs & LBCMR &                    QALYs & LBCMR \\
    \midrule
        1.0000, 0.0000, 0.0000 &                  39.949 &    3.780 &                  40.278 &    3.897 &                  40.317 &    4.301 \\
        0.9954, 0.0016, 0.0030 &                  39.886 &    3.960 &                  40.216 &    4.079 &                  40.255 &    4.478 \\
        0.9850, 0.0060, 0.0090 &                  39.783 &    4.342 &                  40.110 &    4.463 &                  40.149 &    4.854 \\
        0.9750, 0.0120, 0.0130 &                  39.715 &    4.653 &                  40.032 &    4.766 &                  40.072 &    5.164 \\
        0.9500, 0.0180, 0.0320 &                  39.426 &    5.715 &                  39.733 &    5.826 &                  39.770 &    6.200 \\
    \bottomrule
    \end{tabular}
    }}\\*[0.25em]
    \subfloat[HR patient. \label{tab:MOB_UB_table_HR}]{
    \resizebox{0.959\textwidth}{!}{
    
    \begin{tabular}{crrrrrr}
        \toprule
        & \multicolumn{2}{c}{Min. LBCMR} & \multicolumn{2}{c}{Weighted} & \multicolumn{2}{c}{Max. QALYs} \\
        \cmidrule(lr){2-3}\cmidrule(lr){4-5}\cmidrule(lr){6-7}
        Belief state &                    QALYs & LBCMR &                    QALYs & LBCMR &                    QALYs & LBCMR\\
        \midrule
            0.9890, 0.0030, 0.0080 &                  39.428 &    7.441 &                  39.720 &    7.523 &                  39.754 &    8.050 \\
            0.9755, 0.0085, 0.0160 &                  39.306 &    7.900 &                  39.587 &    7.974 &                  39.623 &    8.508 \\
            0.9500, 0.0180, 0.0320 &                  39.065 &    8.794 &                  39.337 &    8.867 &                  39.369 &    9.370 \\
            0.9300, 0.0280, 0.0420 &                  38.908 &    9.415 &                  39.173 &    9.487 &                  39.205 &    9.978 \\
            0.9200, 0.0330, 0.0470 &                  38.829 &    9.725 &                  39.092 &    9.797 &                  39.123 &   10.283 \\
        \bottomrule
        \end{tabular}
    }}\\
\end{table}

We find that the weighted objective manages to balance the QALYs and LBCMR values.
Specifically, for all of the considered belief states for both AR and HR patients, the QALYs and LBCMR for the weighted objective tend to be very close to their optimal values, as given by the maximize QALYs and minimize LBCMR objectives, respectively.
For AR patients, following the policy from the weighted objective model results in an average increase of just 0.12\% to the LBCMR over the policy from the LBCMR minimization model, and with a decrease in QALYs on average of just 14 days compared to the policy by QALY maximization model.
On the other hand, the LBCMR minimization policy would decrease QALYs on average by almost four months, and the QALY maximization policy would increase the risk on average by 0.39\%. 
The results for HR patients show an even bigger improvement, with the policy by the weighted objective model adding just 0.07\% to the LBCMR, and decreasing the QALYs by just 12 days compared to the single objective counterparts. 
In contrast, for the HR patients, switching to the QALY maximization policy would increase the risk value on average by 0.51\%, and switching to the LBCMR minimization policy would decrease the QALYs by about 3.3 months. 
This highlights the benefits of the multi-objective approach, as it provides an opportunity to balance the different objectives. 
That is, when a CPOMDP model only considers the QALY maximization objective, it is willing to accept a large increase in risk for minimal QALY gains. 
Similarly, policies which only aim to minimize LBCMR are willing to sacrifice a great deal of QALYs for minimal improvement in LBCMR.
The policies obtained by using a weighted objective achieve neither maximal QALYs nor minimal LBCMR values, but they provide a balance between the two objectives.

We next experiment with the budget-constrained multi-objective CPOMDP models for AR and HR patients. 
Specifically, a small budget (\$350), an average budget (\$850), and a large budget (\$1,700) are each considered and their impact on the resulting policies is compared. 
We adopt two different approaches to solve the resulting approximate model: weighted-sum method by selecting a large number (100) of distinct weight values to solve the model using the weighted objective, and applying the $\epsilon$-constraint method~\citep{mavrotas2009effective}.
In the weighted-sum method, for easier and more meaningful selection of weight values, we first bring both objective terms to same scale by using the scale factors $\scaleFactor_1$ and $\scaleFactor_2$.
These are determined in a way that the scaled values of QALYs and LBCMR are approximately within the range $[0, 1]$.
For instance, previous studies show that the QALY values are typically around 40 years and LBCMR values are around 0.1 (i.e., 10\%), so 
$\scaleFactor_1 = 40$ and $\scaleFactor_2 = 10$ can be used in our analysis.
In the $\epsilon$-constraint method, we maximize the single objective $h_1$, subject to the constraint that $-h_2$ is no less than a particular value (i.e., we maximize QALYs subject to a pre-calculated maximum tolerable LBCMR).
Upper limit for $-h_2$ is changed iteratively to generate nondominated solutions.
Figures~\ref{fig:epsilon_constraint_vs_weighted_AR} and \ref{fig:epsilon_constraint_vs_weighted_HR} show the results for AR and HR patients, respectively.
As expected, repeatedly solving the model using the weighted objective --- a naive approach to multi-objective optimization --- fails to identify the majority of the values on the Pareto frontier.
On the other hand, the $\epsilon$-constraint method performs well in this regard and helps to quantify the trade-offs between the QALY maximization and LBCMR minimization objectives.

\begin{figure}[!ht]
    \centering
    \subfloat[Budget: \$350. \label{fig:AR-epsilon-350}]{\includegraphics[width=0.33\textwidth]{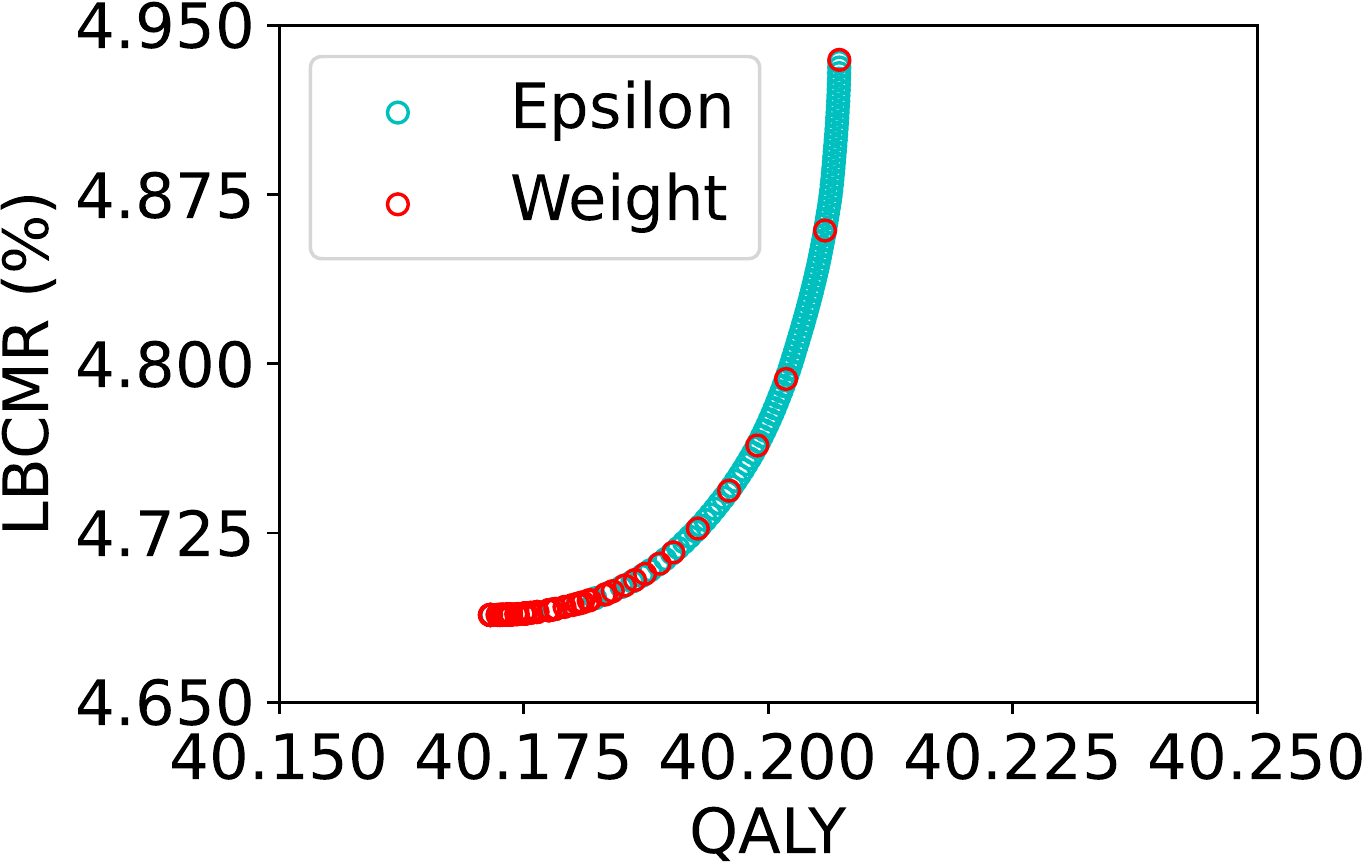}}
    \hfill
    \subfloat[Budget: \$850]{\includegraphics[width=0.33\textwidth]{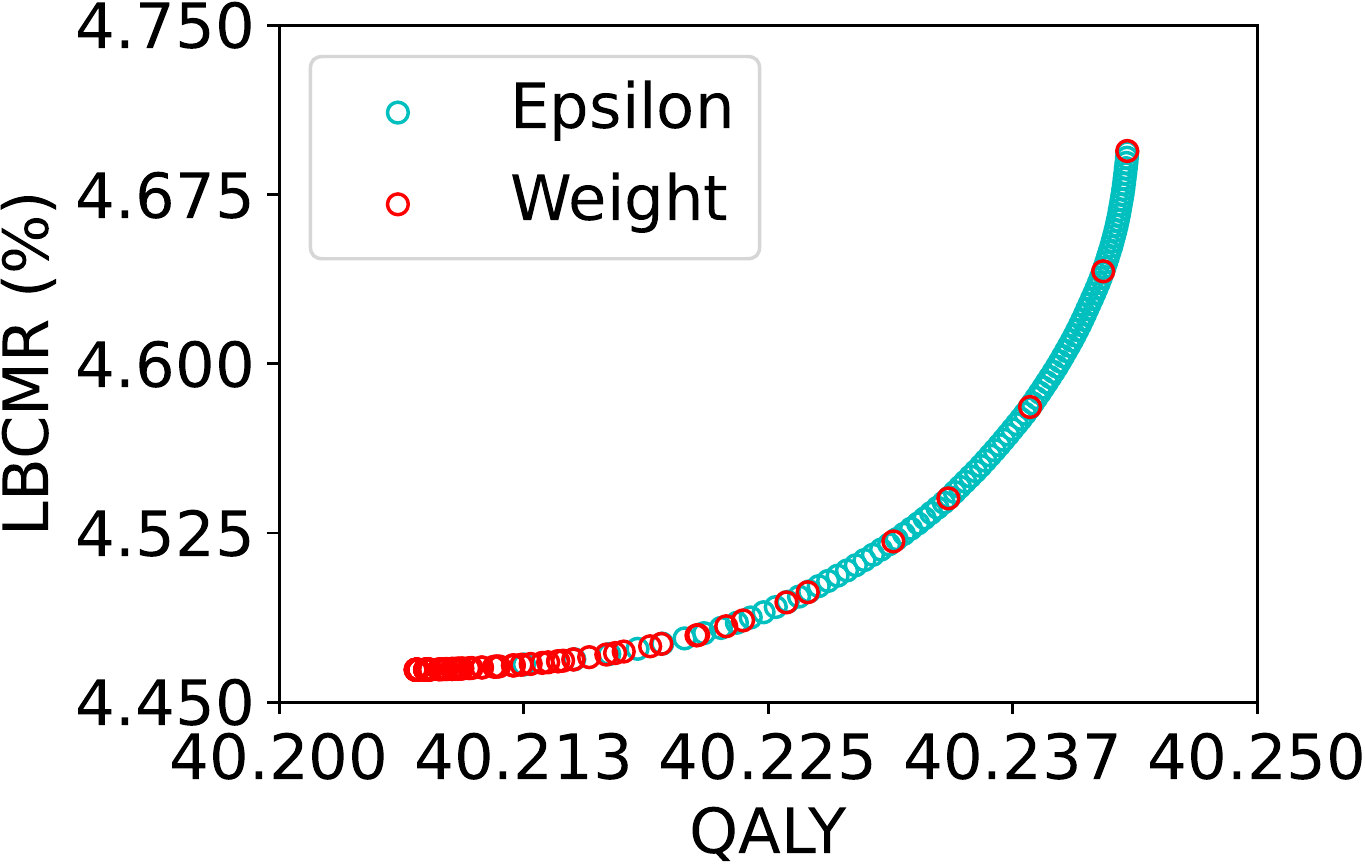}}
    \subfloat[Budget: \$1700]{\includegraphics[width=0.33\textwidth]{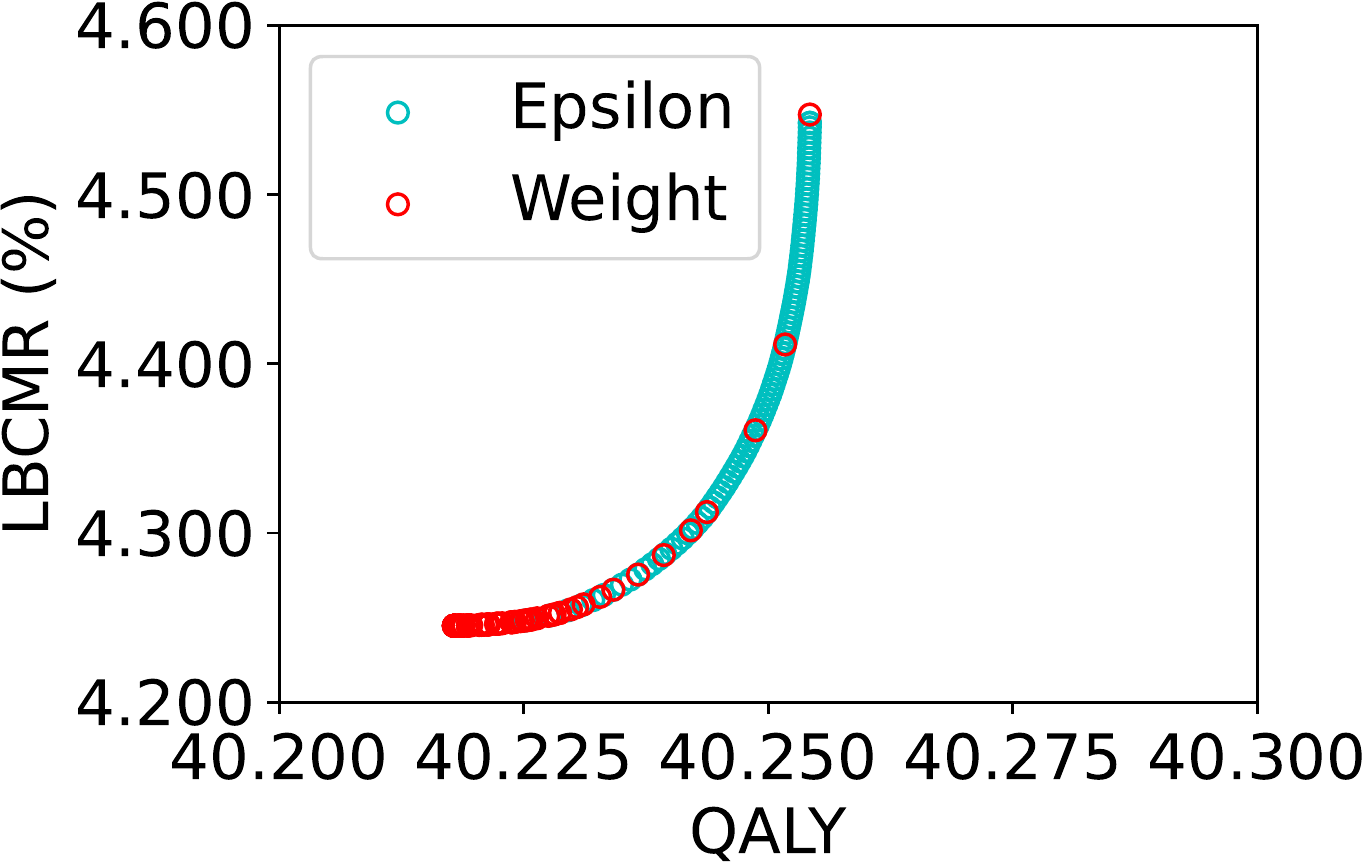}}
    \caption{Results with weighted objective and $\epsilon$-constraint methods for various budget levels for AR patients (patient starting belief state is [0.9954, 0.0016, 0.003]).}
    \label{fig:epsilon_constraint_vs_weighted_AR}
\end{figure}

\begin{figure}[!ht]
    \centering
    \subfloat[Budget: \$350\label{fig:HR-epsilon-350}]{\includegraphics[width=0.33\textwidth]{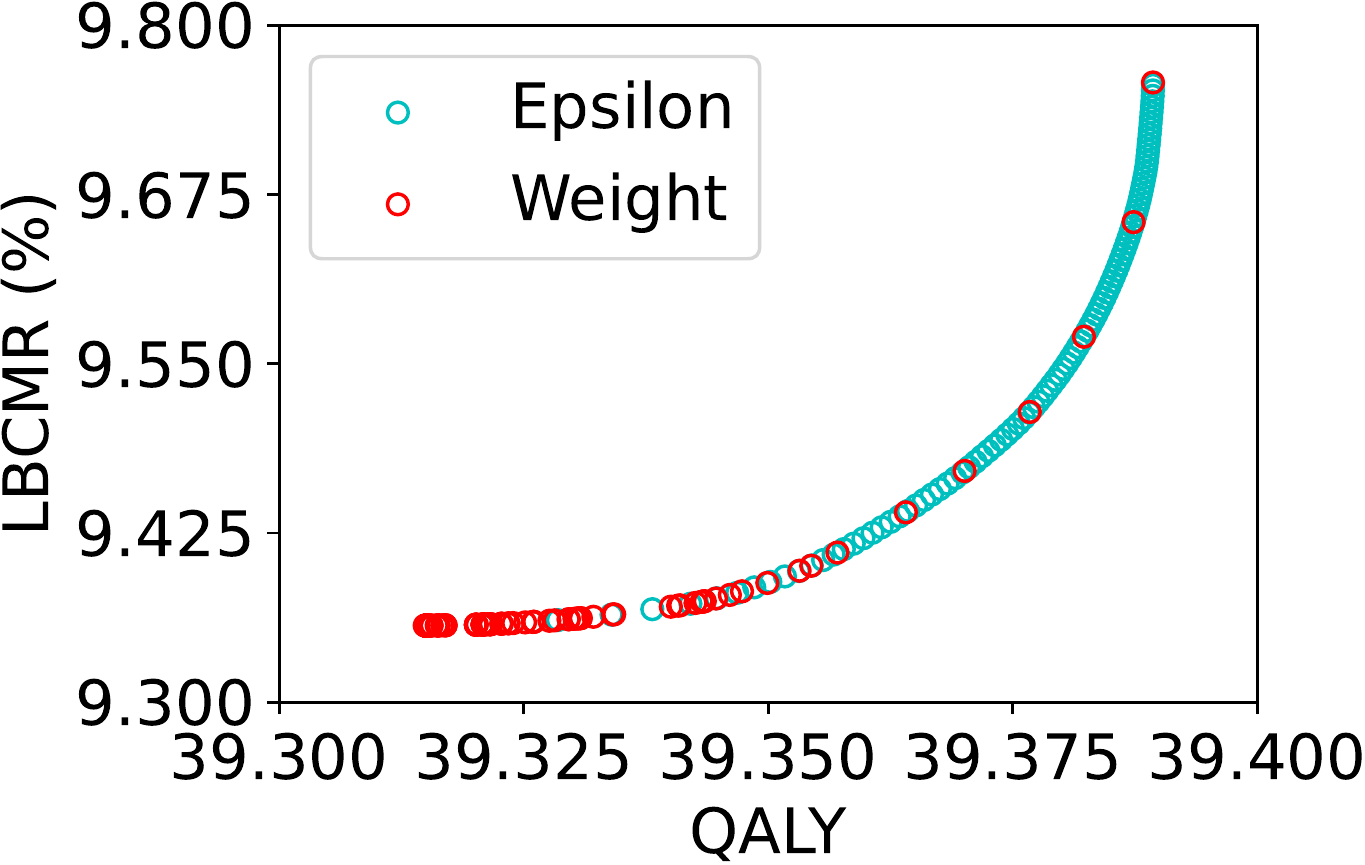}}
    \hfill
    \subfloat[Budget: \$850]{\includegraphics[width=0.33\textwidth]{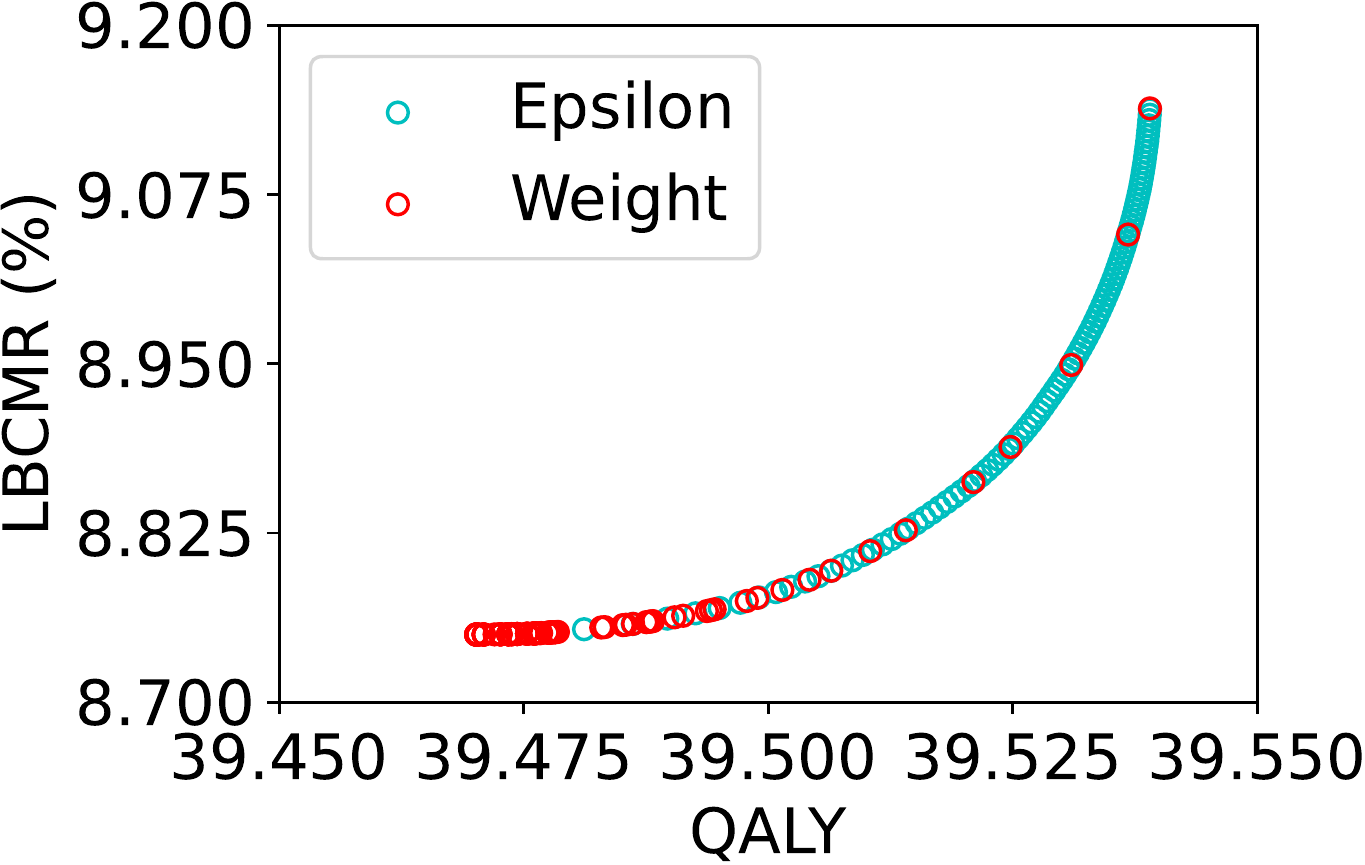}}
    \hfill
    \subfloat[Budget: \$1700]{\includegraphics[width=0.33\textwidth]{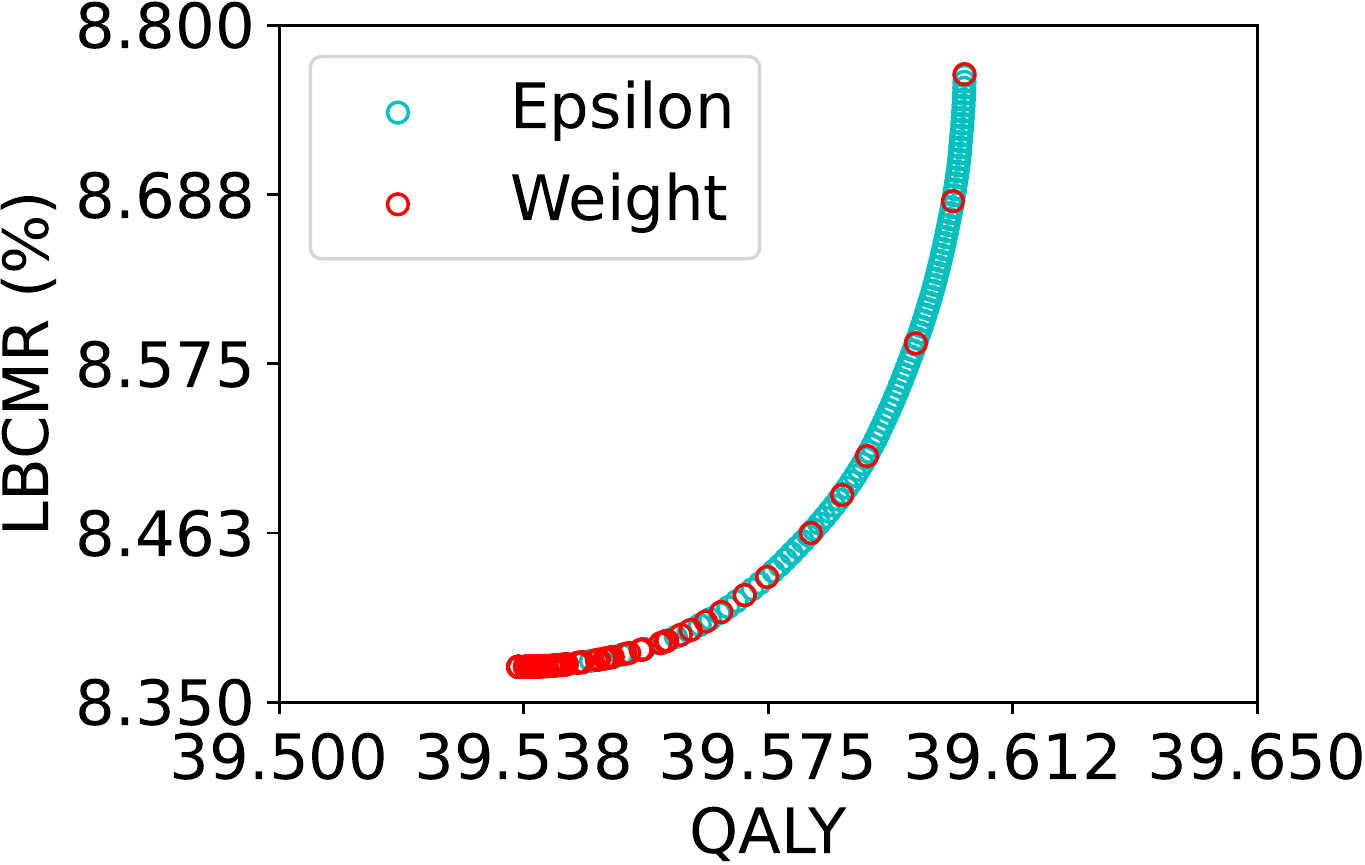}}
    \caption{Results with weighted objective and $\epsilon$-constraint methods for various budget levels for HR patients (patient starting belief state is [0.9755, 0.0085, 0.016]).}
    \label{fig:epsilon_constraint_vs_weighted_HR}
\end{figure}

The Pareto fronts in figures~\ref{fig:epsilon_constraint_vs_weighted_AR} and~\ref{fig:epsilon_constraint_vs_weighted_HR} reveal that, for each of the considered budgets, the selected weight vector has a substantial impact on LBCMR, but minimal effect on QALYs. 
Figure~\ref{fig:AR-epsilon-350} shows that, for AR patients with $\parBudgetLim = \$350$, QALY values range between 40.17 and 40.21 (i.e., $\pm$ 0.04 years/14 days) and risk values range between 4.69\% and 4.94\% (i.e., $\pm$ 0.25\%).
Similarly, for $\parBudgetLim = \$1,700$, QALY values range between 40.22 and 40.25 (i.e., $\pm$ 0.03 years/11 days) and risk values range between 4.25\% and 4.55\% (i.e., $\pm$ 0.30\%).
For HR patients, the impact on QALYs is more pronounced, where, in Figure~\ref{fig:HR-epsilon-350}, the difference in QALYs is about 27 days. 
This trend continues for the higher budgets, where the average gap in QALYs is about 25 days, and the average gap in risk is about 0.39\%.
This can again be attributed to the elevated likelihood of developing breast cancer for HR patients, that is, additional screenings can be especially helpful in reducing the LBCMR, even for relatively healthy patients, whereas QALY maximization often involves avoiding screening procedures for patients with healthier belief states. 

\subsection{Policy Evaluation Results}\label{sec:policy-evaluation-results}

Figures~\ref{fig:ar-cpomdp-vs-rule} and~\ref{fig:hr-cpomdp-vs-rule} compare the performance of multi-objective CPOMDP policies obtained for three distinct objective weight configurations against the following fixed-interval (i.e., rule-based) policies: annual mammography screenings (A), biannual mammography screenings (B), biannual mammography screenings supplemented with MRI (B+R), and biannual mammography screenings supplemented with ultrasound (B+U). 
Note that these results are collected by using a discrete-event simulation model that involves simulating the lifetime of 100,000 patients.

\begin{figure}[!ht]
    \centering
    \resizebox{.4\textwidth}{!}{\fbox{\begin{tabular}{ccc}
\textcolor{tab-blue}{\tikz\draw[tab-blue,fill=tab-blue] (0,0) circle (.7ex);}  Budget: \$350 &
\textcolor{tab-orange}{$\large\blacktriangledown$}  Budget: \$850  &
\textcolor{tab-green}{$\blacksquare$} Budget: \$1700 \\
\end{tabular}}} \\[2pt]
    \subfloat[\centering QALY gains vs no screening (39.4 years).\label{fig:ar-QALYs-gain}]{\includegraphics[width=.33\textwidth]{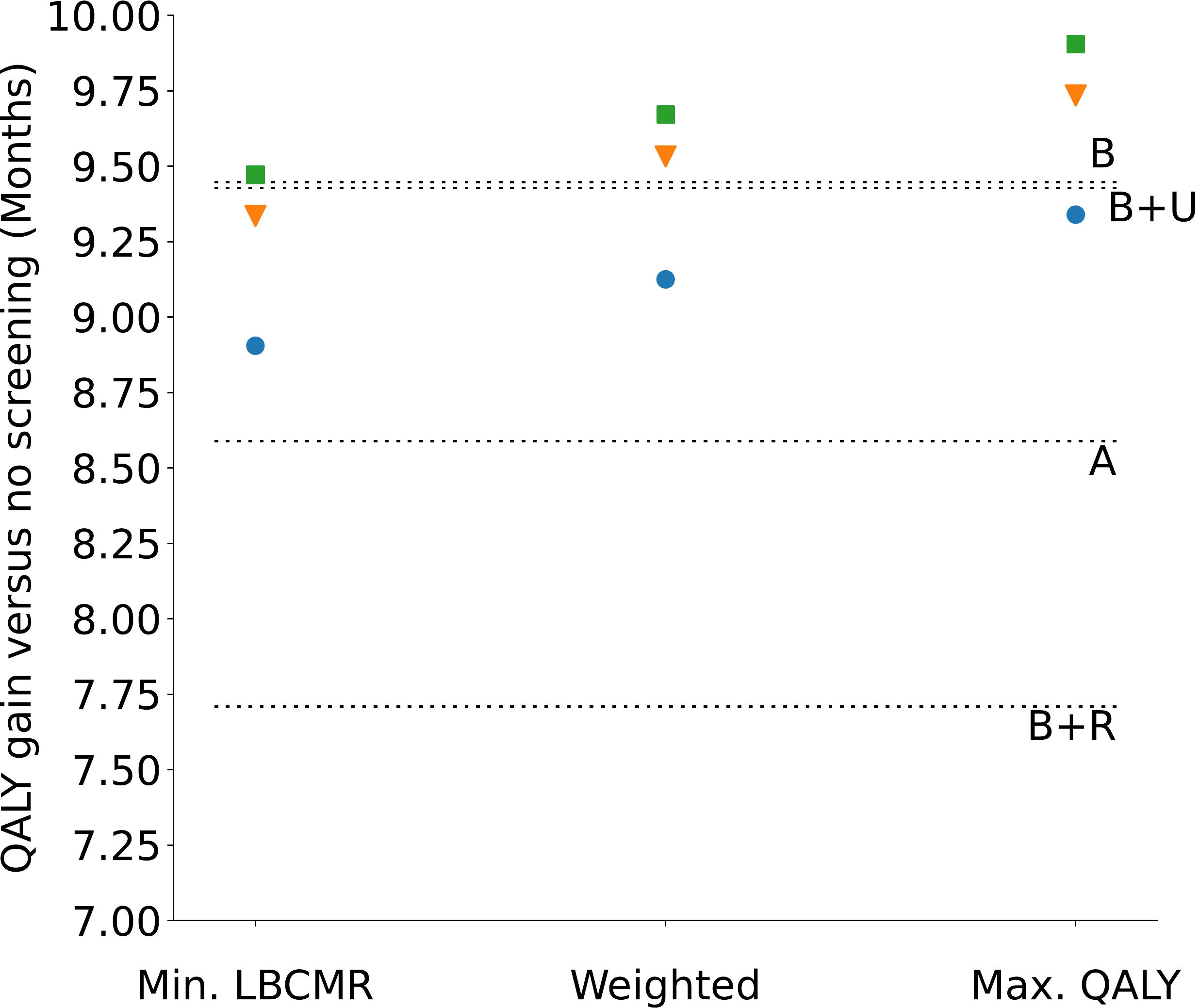}}
    \hfill
    \subfloat[\centering Decrease in LBCMR vs no screening (8.0\%).\label{fig:ar-risk-loss}]{\includegraphics[width=.33\textwidth]{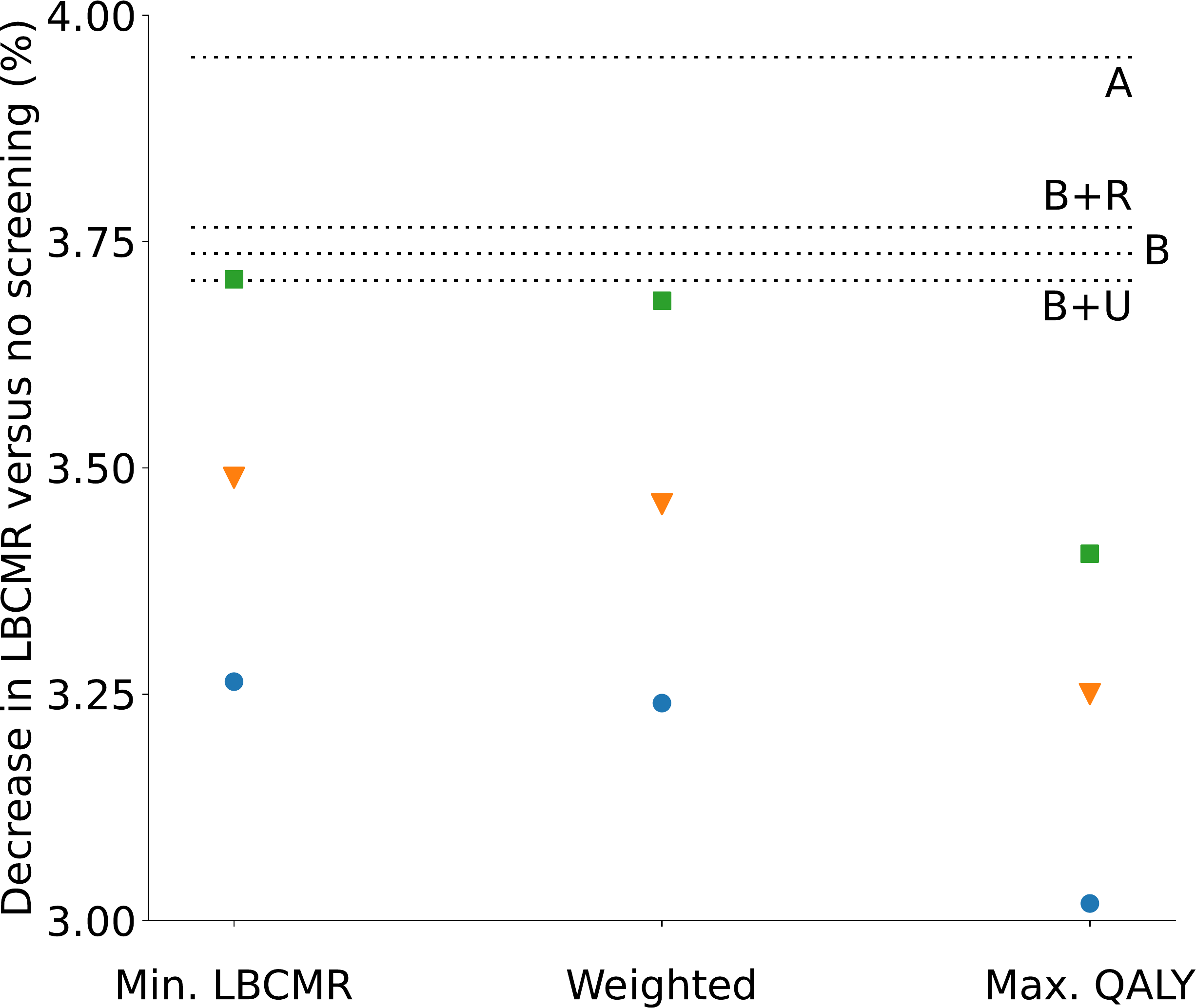}}
    \hfill
    \subfloat[\centering Budget usage ($B+R=\$30,046$).~\label{fig:ar-budget-usage}]{\includegraphics[width=.33\textwidth]{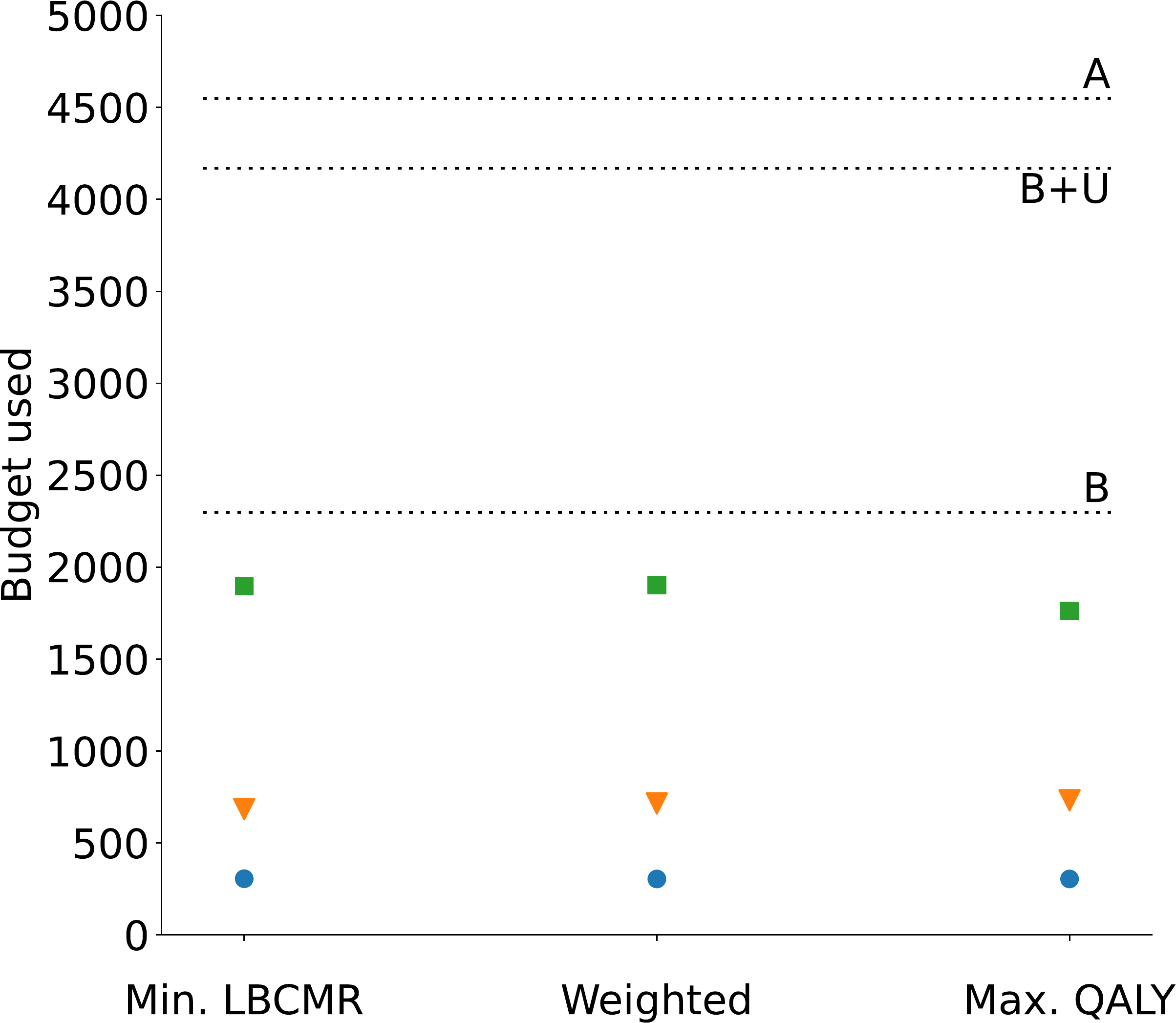}}
    \caption{Comparisons of screening outcomes against fixed-interval policies for HR patients (results for no screening policy are in parenthesis in (a) and (b)).}
    \label{fig:ar-cpomdp-vs-rule}
\end{figure}

Figure~\ref{fig:ar-QALYs-gain} shows the relative QALY gain for each policy compared to no screening for AR patients. 
We observe that a budget of just \$350 can be used to increase the QALYs above 
several rule-based screening strategies, with a budget of just \$850 outperforming every rule-based policy in terms of QALYs. 
This demonstrates that using fewer screenings as prescribed by the CPOMDP model can be deemed sufficient in maximizing the QALY values for the AR patients. 
However, Figure~\ref{fig:ar-risk-loss} shows that the frequent screenings associated with rule-based policies tend to outperform budgeted policies for LBCMR minimization.
This result is expected, as there is no detriment to overscreening when attempting to minimize the risk of dying due to cancer. 
Figure~\ref{fig:ar-budget-usage} shows a significant difference in budgets used between the budgeted policies and the rule-based ones, with annual mammographies using approximately 2.5 times the maximum budget (i.e., \$1,700) with 
only a 0.25\% improvement in LBCMR. 
Figure~\ref{fig:ar-budget-usage} also reveals that CPOMDP policies may exceed their budget. 
Specifically, each of the policies generated using a budget of \$1,700 exceeds this budget in simulations.
Note that exceeding the imposed budget limits in a simulation environment can be attributed to the employed grid-based approximation scheme, as the approximate LP/MIP model only optimizes over the pre-defined grid points.
Similar observations regarding the constraint violation of grid-based approximations for CPOMDPs were also made in other studies (e.g., see \citep{poupart2015}).

Figure~\ref{fig:hr-QALYs-gain} shows that the rule-based policies perform similarly to one another with respect to maximizing QALYs in the case of HR patients.
Additionally, the maximum gain in QALYs is more than double that of AR patients.
Unlike in the case of AR patients, a budget of \$350 does not produce any competitive policies, and, as can be seen in both Figures~\ref{fig:hr-QALYs-gain} and~\ref{fig:hr-risk-loss}, the results for this budget level is well separated from the other policies. 
However, a budget of \$350 is still a significant improvement over no screening, and it is the most valuable policy per dollar spent. 
For HR patients, the gap between the best budget-constrained policy and the best rule-based policy with respect to risk minimization is about 0.35\%, which is larger than the gap observed for AR patients. 
Figure~\ref{fig:hr-budget-usage} shows that rule-based policies for HR patients tend to cost slightly less than for AR patients.
This observation can be attributed to the fact that HR patients are more likely to develop cancer and are therefore more likely to leave the decision process due to diagnosis or death.
In contrast, the budgeted policies tend to fully utilize the available budget for both AR and HR patients, as they are able to account for this consideration. 

\begin{figure}[htb]
    \centering
    \resizebox{.4\textwidth}{!}{\fbox{\begin{tabular}{ccc}
\textcolor{tab-blue}{\tikz\draw[tab-blue,fill=tab-blue] (0,0) circle (.7ex);}  Budget: \$350 &
\textcolor{tab-orange}{$\large\blacktriangledown$}  Budget: \$850  &
\textcolor{tab-green}{$\blacksquare$} Budget: \$1700 \\
\end{tabular}}} \\[0pt]
    \subfloat[\centering QALY gains vs no screening (37.7 years).\label{fig:hr-QALYs-gain}]{\includegraphics[width=.33\textwidth]{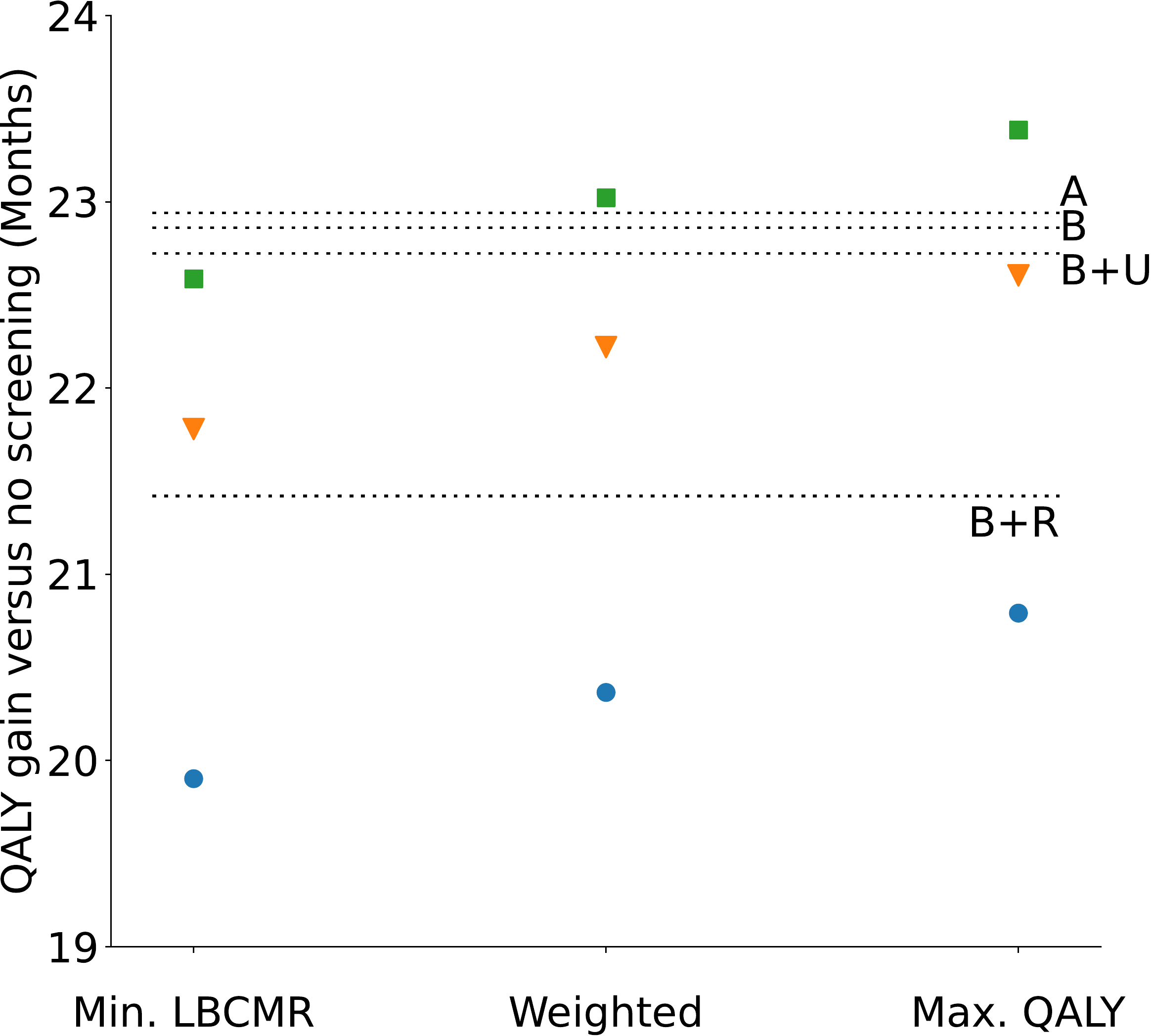}}
    \hfill
    \subfloat[\centering Decrease in risk vs no screening (16.3\%).\label{fig:hr-risk-loss}]{\includegraphics[width=.33\textwidth]{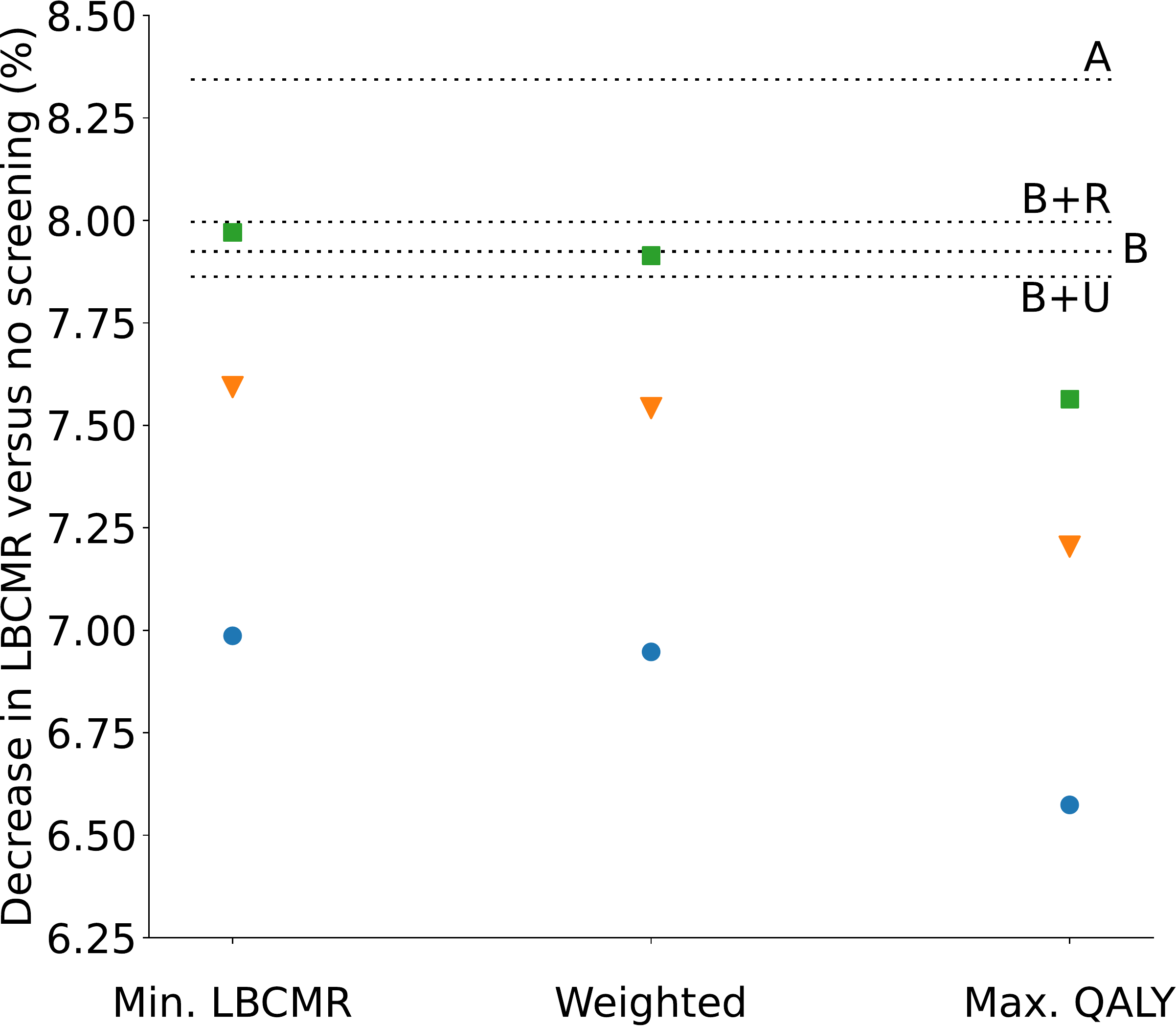}}
    \hfill
    \subfloat[\centering Budget usage ($B+R=\$ 28,295$).\label{fig:hr-budget-usage}]{\includegraphics[width=.33\textwidth]{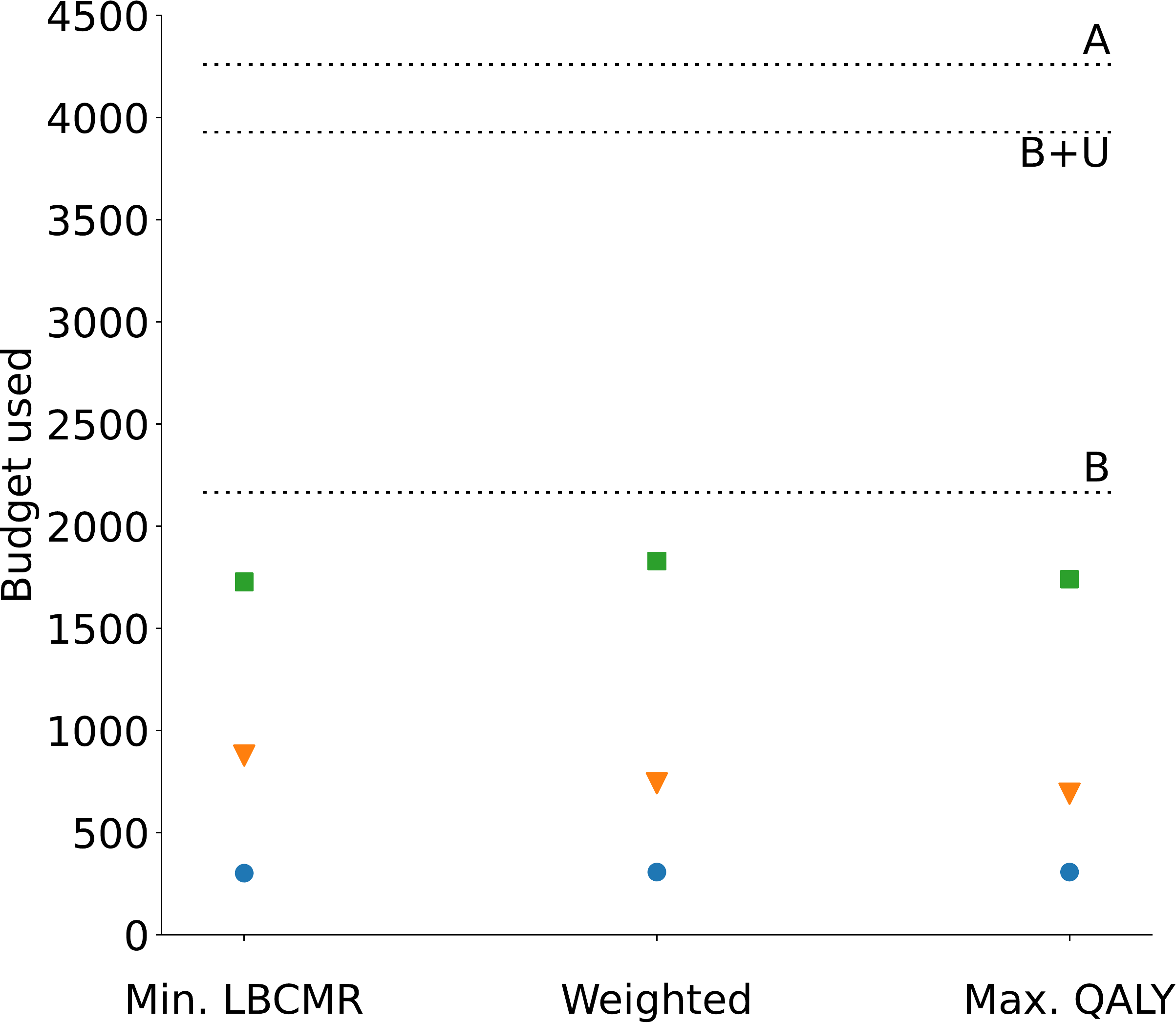}}
    \caption{Comparisons of screening outcomes against fixed-interval policies for HR patients (results for no screening policy are in parenthesis in (a) and (b)).}
        \label{fig:hr-cpomdp-vs-rule}
\end{figure}

Table~\ref{tab:simulated-actions-taken} summarizes the screening actions prescribed by various CPOMDP policies based on a discrete-event simulation.
Specifically, we simulate the lifetime of 100,000 patients using our CPOMDP model data (e.g., starting belief states and transition/observation probabilities) and collect the relevant statistics, namely, QALY, LBCMR and cost estimates, to evaluate the implications of different screening policies. 
Note that ``M/R/U'' column shows, respectively, the percentage of decision epochs with mammography only, mammography + MRI and mammography + ultrasound screening actions.
As expected, with an infinite budget, LBCMR minimization policies employ aggressive screening using the highest sensitivity modality, i.e., mammography supplemented with MRI. 
In contrast, the QALY maximization policies tend to recommend fewer screenings and, even without a budget constraint, they prescribe mammography screenings conservatively. 
Some supplemental tests with ultrasounds are prescribed for these patients, but as ultrasound adds an extra half day of disutility, it is often the case that the decreased likelihood of false positives does not outweigh the additional disutility of adding the supplemental test. 
Policies from the weighted objective CPOMDP model lead to screening more aggressively than the QALY maximization policies, but they are also more likely to recommend mammography supplemented with ultrasound when screening is necessary. 
The budgeted policies all tend to use mammography alone, preserving their budget to allow for additional screenings.
Remarkably, there are several instances like the AR patient with $\parBudgetLim=\$350$ case where, for each objective, the percentage of mammography actions taken is very similar, but the results from Table~\ref{tab:simulated-actions-taken} clearly show substantial differences in QALYs and LBCMR.
These results show that even with access to a limited number of screening actions, QALYs and LBCMR can vary considerably depending on to whom the screenings are prescribed (e.g., at different ages).

\begin{table}[!ht]
\centering

\caption{Simulation results for three optimization objectives, budgets, and two patient types (M/R/U: screening action percentages; LBCMR values are in \%).}
\label{tab:simulated-actions-taken}
\resizebox{\textwidth}{!}{
\begin{tabular}{lrrrrrrrrrrrrr}
\toprule
      &  & \multicolumn{4}{c}{Min. LBCMR} & \multicolumn{4}{c}{Weighted} & \multicolumn{4}{c}{Max. QALYs} \\
      \cmidrule(lr){3-6}\cmidrule(lr){7-10}\cmidrule(lr){11-14}
Patient & $\parBudgetLim$ &      QALYs & LBCMR & Cost &         M/R/U &     QALYs & LBCMR & Cost &          M/R/U &      QALYs & LBCMR & Cost &          M/R/U \\
\midrule
AR & 350 &     40.17 &      4.69 &         306 &   6.7/0.0/0.0 &    40.19 &      4.71 &         304 &    6.6/0.0/0.0 &     40.21 &      4.94 &         304 &    6.6/0.0/0.0 \\
      & 850 &     40.21 &      4.46 &         683 &  14.9/0.0/0.0 &    40.22 &      4.49 &         714 &   15.6/0.0/0.0 &     40.24 &      4.70 &         732 &   16.0/0.0/0.0 \\
      & 1700 &     40.22 &      4.25 &        1896 &  41.6/0.0/0.0 &    40.23 &      4.27 &        1903 &   41.7/0.0/0.0 &     40.25 &      4.55 &        1762 &   28.4/0.0/5.6 \\
      & --- &     39.88 &      3.96 &       54810 &  2.1/92.0/0.0 &    40.22 &      4.08 &        3753 &  57.2/0.0/13.9 &     40.25 &      4.50 &        2235 &  26.8/0.0/12.2 \\
\midrule
HR & 350 &     39.31 &      9.36 &         302 &   7.0/0.0/0.0 &    39.35 &      9.40 &         307 &    7.1/0.0/0.0 &     39.39 &      9.77 &         307 &    7.1/0.0/0.0 \\
      & 850 &     39.47 &      8.75 &         876 &  20.4/0.0/0.0 &    39.51 &      8.80 &         741 &   17.2/0.0/0.0 &     39.54 &      9.14 &         690 &   16.0/0.0/0.0 \\
      & 1700 &     39.54 &      8.37 &        1727 &  40.3/0.0/0.0 &    39.57 &      8.43 &        1829 &   42.7/0.0/0.0 &     39.60 &      8.78 &        1741 &   40.6/0.0/0.0 \\
      & --- &     39.31 &      7.90 &       51770 &  1.5/92.7/0.0 &    39.59 &      7.97 &        6558 &  64.9/3.1/26.4 &     39.62 &      8.52 &        3431 &  55.3/0.0/13.8 \\
\bottomrule
\end{tabular}}
\end{table}

As with the unconstrained case, Table~\ref{tab:simulated-actions-taken} shows that the weighted objective offers a reasonable trade-off between QALYs and LBCMR.
That is, our multi-objective CPOMDP formulation can be used to identify a policy that balances these two objectives, as the QALY maximization (LBCMR minimization) objective incurs substantial sacrifices to LBCMR (QALYs) for a minimal gain in QALYs (LBCMR).
In contrast, the weighted objective tends to achieve near maximal/minimal values in terms of QALYs/LBCMR.


\subsection{Sample Screening Recommendation Scenarios}\label{sec:screening-recommendation-scenarios}

In the previous experiment, the investigation into POMDP/CPOMDP policies focused on the summary statistics obtained by a discrete-event simulation model with a large number of patients.
In this experiment, we consider how each policy impacts the screening decisions made by a physician throughout a patient's lifetime.
Here, we make two assumptions.
Firstly, we assume that the patient never develops any form of breast cancer, as this could result in them exiting the decision process early.
Secondly, we assume that every screening action yields a negative result, including those taken during a prescribed wait period (i.e., the result of a clinical breast exam/self-examination is always negative). 
We observe the decision process for varying policies and patient types according to these two assumptions.

Table~\ref{tab:ar-patient-avg-budget-sim} compares the constrained policies using an average budget for each objective for an AR patient with a starting belief of $\vectBePt = [0.9954, 0.0016, 0.003]$ (i.e., in-situ cancer risk of 0.16\% and invasive cancer risk of 0.3\%). 
Rows corresponding to a wait action recommendation are omitted from the table. These results reveal that regardless of the objective, the policies are generally the same in this particular scenario. 
There is one noticeable difference, as the QALY maximization objective takes its final screening action at age 64 while the other two objectives take their final screening at age 73. 
This difference in screening ages can be attributed to the fact that as a patient gets older, they are more likely to develop cancer. 
Additionally, the sensitivity of screening tests increases as the patients get older. Accordingly, by saving this screening action until a later age, the risk of cancer death can be reduced. 
In the case of the QALY maximization objective however, a main consideration is the collection of salvage rewards, which represent the lump-sum QALYs associated with a patient being diagnosed at a particular age and leaving the screening process. 
These salvage rewards are lower for older patients. 
Accordingly, the QALY maximization objective attempts to balance the risk of a patient dying (insofar as it affects QALYs) with the reward for diagnosing a patient.
\begin{table}[htb]

\caption{Sample screening recommendations for an AR patient considering average budget limit ($\parBudgetLim=\$850$; in-situ and invasive cancer risks are in \%).}
\label{tab:ar-patient-avg-budget-sim}
\subfloat[Min. LBCMR~\label{tab:ar-patient-min-risk-sim}]
{\resizebox{0.31\textwidth}{!}{\begin{tabular}{rrrl}
\toprule
 age &  in-situ & invasive & action \\
\midrule
  51 &     0.32 &      0.19 &         Mam. \\
  61 &     0.55 &      0.32 &         Mam. \\
  73 &     0.89 &      0.49 &         Mam. \\
\bottomrule
\end{tabular}
}}
\hfill\subfloat[Weighted\label{tab:ar-patient-weighted-sim}]
{\resizebox{0.31\textwidth}{!}{\begin{tabular}{rrrl}
\toprule
 age &  in-situ &  invasive & action \\
\midrule
  51 &     0.32 &      0.19 &         Mam. \\
  61 &     0.55 &      0.32 &         Mam. \\
  73 &     0.89 &      0.49 &         Mam. \\
\bottomrule
\end{tabular}
}}
\hfill
\subfloat[Max. QALY~\label{tab:ar-patient-max-QALYs-sim}]
{\resizebox{0.31\textwidth}{!}{\begin{tabular}{rrrl}
\toprule
 age &  in-situ &  invasive & action \\
\midrule
  51 &     0.32 &      0.19 &         Mam. \\
  60 &     0.52 &      0.31 &         Mam. \\
  64 &     0.61 &      0.33 &         Mam. \\
\bottomrule
\end{tabular}
}}
\end{table}

Unlike for AR patients, the screening recommendations for HR patients vary considerably across different objectives.
HR patients are more likely to develop breast cancer and, accordingly, they tend to begin the decision process at a higher likelihood of having breast cancer.
For this particular scenario, we consider a starting belief of $\vectBePt =[0.9755,0.0085,0.0160]$. 
Accordingly, each policy prescribes a screening at age 40, the earliest available time to do so in the decision process, as shown in Table~\ref{tab:hr-patient-avg-budget-sim}. 
However, the next screening is not recommended until age 57 for the LBCMR minimization policy, and age 64 for the weighted and QALY maximization policies.
Tables~\ref{tab:hr-patient-min-risk-sim} and~\ref{tab:hr-patient-max-QALYs-sim} reveal that the number of screenings taken for a patient can vary considerably across different objectives.
In the case of the AR patient, each objective yielded just three mammography actions, whereas, for HR patients with the LBCMR minimization policy, the number of mammographies is double that of the QALY maximization policy.
In the case of QALYs this is intuitive: recall that these simulations assume a negative screening result, even during a wait action period, meaning that this particular patient is able to stay in a healthier belief state during their lifetime compared to other patients (i.e., scenarios). 
Accordingly, the QALY maximization objective will allocate an additional budget for patients that are in worse belief/health states due to positive observations (i.e., $\indObs = \indObsPositive$).
Notably, the minimize LBCMR policy evaluation comes in under budget, costing just \$804 of the \$850 budget limit. In contrast, results in Table~\ref{tab:simulated-actions-taken} shows that the average cost for patients subjected to this policy is \$876. This result is to be expected, as the assumptions made in this policy evaluation tend to keep the patient in a healthier belief state. In particular, positive results during wait actions will tend to make the patient appear more likely to have cancer, but as the assumptions made in this evaluation only allow negative results during wait actions, this patient tends to stay in a healthier belief state. Accordingly, the policy has allocated additional budget to those patients that are more likely to have breast cancer.
Furthermore, observing a high number of screenings in this all-negative observations scenario indicates that LBCMR minimization is affected at a smaller scale by the perceived belief/health states of the patients, and screenings are allocated more uniformly across different scenarios that a patient might follow.

\begin{table}[htb]

\caption{Sample screening recommendations for an HR patient considering average budget limit ($\parBudgetLim=\$850$; in-situ and invasive cancer risks are in \%).}
\label{tab:hr-patient-avg-budget-sim}
\subfloat[Min. LBCMR~\label{tab:hr-patient-min-risk-sim}]
{\resizebox{0.31\textwidth}{!}{
\begin{tabular}{rrrl}
\toprule
 age &  in-situ &  invasive & action \\
\midrule
  40 &     0.85 &      1.60 &         Mam. \\
  57 &     0.88 &      0.52 &         Mam. \\
  64 &     1.26 &      0.72 &         Mam. \\
  70 &     1.63 &      0.89 &         Mam. \\
  72 &     1.41 &      0.69 &         Mam. \\
  78 &     1.85 &      0.96 &         Mam. \\
\bottomrule
\end{tabular}
}}
\hfill
\subfloat[Weighted~\label{tab:hr-patient-weighted-sim}]
{\resizebox{0.31\textwidth}{!}{
\begin{tabular}{rrrl}
\toprule
 age &  in-situ &  invasive & action \\
\midrule
  40 &     0.85 &      1.60 &         Mam. \\
  64 &     1.27 &      0.73 &         Mam. \\
  70 &     1.63 &      0.89 &         Mam. \\
  72 &     1.41 &      0.69 &         Mam. \\
\bottomrule
\end{tabular}
}}
\hfill
\subfloat[Max. QALY~\label{tab:hr-patient-max-QALYs-sim}]
{\resizebox{0.31\textwidth}{!}{
\begin{tabular}{rrrl}
\toprule
 age &  in-situ &  invasive & action \\
\midrule
  40 &     0.85 &      1.60 &         Mam. \\
  64 &     1.27 &      0.73 &         Mam. \\
  72 &     1.75 &      0.96 &         Mam. \\
\bottomrule
\end{tabular}
}}
\end{table}

Sample screening recommendation scenarios for an HR patient with no budget restrictions are also considered for the three objective functions to demonstrate the impact of budget limits on the screening recommendations. 
Table~\ref{tab:hr-unconstrained-max-QALYs-sim} shows that, in this scenario, screening recommendations generally favour supplemental ultrasound for earlier ages, but once the patient reaches 50 years old, the recommended screenings turn to mammography alone. 
Mammography supplemented with ultrasound is assumed to have a higher specificity but a lower sensitivity than mammography alone, meaning that while the ultrasound as a supplemental test reduces the risk of false positives, it also leads to fewer true positives.
At earlier ages, when patients are less likely to develop breast cancer, this reduced chance of false positives is substantial enough to justify the supplemental test, even with the additional disutility associated with the test.
However, as the patient ages, the reduced risk of false positives is outweighed by the additional disutility of the supplemental test, as well as the increased chance of having cancer go unnoticed. 
\begin{table}[htb]

\caption{Sample screening recommendations for an HR patient considering unlimited budget (in-situ and invasive cancer risks are in \%)}
\label{tab:hr-patient-unconstrained-sim}
\subfloat[Min. LBCMR~\label{tab:hr-unconstrained-min-risk-sim}]
{\resizebox{0.31\textwidth}{!}{
\begin{tabular}{rrrl}
\toprule
 age &  in-situ &  invasive & action \\
\midrule
  40 &     0.85 &      1.60 &   Mam. + MRI \\
  41 &     0.31 &      0.25 &   Mam. + MRI \\
  42 &     0.23 &      0.12 &   Mam. + MRI \\
  43 &     0.23 &      0.12 &   Mam. + MRI \\
  44 &     0.26 &      0.12 &   Mam. + MRI \\
  45 &     0.28 &      0.14 &   Mam. + MRI \\
  46 &     0.30 &      0.14 &   Mam. + MRI \\
  47 &     0.32 &      0.15 &   Mam. + MRI \\
  48 &     0.33 &      0.16 &   Mam. + MRI \\
  49 &     0.35 &      0.17 &   Mam. + MRI \\
  50 &     0.36 &      0.17 &   Mam. + MRI \\
  51 &     0.37 &      0.18 &   Mam. + MRI \\
  52 &     0.38 &      0.18 &   Mam. + MRI \\
  53 &     0.40 &      0.19 &   Mam. + MRI \\
  54 &     0.42 &      0.20 &   Mam. + MRI \\
  55 &     0.46 &      0.21 &   Mam. + MRI \\
  56 &     0.48 &      0.23 &   Mam. + MRI \\
  57 &     0.52 &      0.25 &   Mam. + MRI \\
  58 &     0.55 &      0.26 &   Mam. + MRI \\
  59 &     0.58 &      0.27 &   Mam. + MRI \\
  60 &     0.61 &      0.29 &   Mam. + MRI \\
  61 &     0.64 &      0.30 &   Mam. + MRI \\
  62 &     0.67 &      0.31 &   Mam. + MRI \\
  63 &     0.69 &      0.32 &   Mam. + MRI \\
  64 &     0.73 &      0.33 &   Mam. + MRI \\
  65 &     0.76 &      0.35 &   Mam. + MRI \\
  66 &     0.80 &      0.36 &   Mam. + MRI \\
  67 &     0.82 &      0.37 &   Mam. + MRI \\
  70 &     1.46 &      0.75 &   Mam. + MRI \\
  71 &     1.05 &      0.46 &   Mam. + MRI \\
  72 &     1.00 &      0.44 &   Mam. + MRI \\
  73 &     1.00 &      0.44 &   Mam. + MRI \\
  74 &     1.01 &      0.44 &   Mam. + MRI \\
  75 &     1.02 &      0.44 &   Mam. + MRI \\
  76 &     1.03 &      0.44 &   Mam. + MRI \\
  77 &     1.03 &      0.44 &   Mam. + MRI \\
  79 &     1.44 &      0.66 &         Mam. \\
\bottomrule
\end{tabular}
}}
\hfill\subfloat[Weighted~\label{tab:hr-unconstrained-weighted-sim}]
{\resizebox{0.31\textwidth}{!}{
\begin{tabular}{rrrl}
\toprule
 age &  in-situ &  invasive & action \\
\midrule
  40 &     0.85 &      1.60 &   Mam. + MRI \\
  41 &     0.31 &      0.25 &    Mam. + US \\
  42 &     0.26 &      0.15 &    Mam. + US \\
  43 &     0.26 &      0.14 &    Mam. + US \\
  44 &     0.28 &      0.14 &    Mam. + US \\
  45 &     0.30 &      0.15 &    Mam. + US \\
  46 &     0.33 &      0.16 &    Mam. + US \\
  47 &     0.35 &      0.17 &    Mam. + US \\
  48 &     0.36 &      0.18 &    Mam. + US \\
  49 &     0.37 &      0.18 &    Mam. + US \\
  50 &     0.38 &      0.19 &         Mam. \\
  51 &     0.38 &      0.19 &         Mam. \\
  52 &     0.40 &      0.19 &         Mam. \\
  53 &     0.41 &      0.20 &         Mam. \\
  54 &     0.44 &      0.21 &         Mam. \\
  55 &     0.47 &      0.22 &         Mam. \\
  56 &     0.50 &      0.24 &         Mam. \\
  57 &     0.53 &      0.26 &         Mam. \\
  58 &     0.56 &      0.27 &         Mam. \\
  59 &     0.60 &      0.28 &         Mam. \\
  60 &     0.62 &      0.30 &         Mam. \\
  61 &     0.65 &      0.30 &         Mam. \\
  62 &     0.68 &      0.32 &         Mam. \\
  63 &     0.70 &      0.33 &         Mam. \\
  64 &     0.74 &      0.34 &         Mam. \\
  65 &     0.77 &      0.36 &         Mam. \\
  66 &     0.80 &      0.37 &         Mam. \\
  67 &     0.83 &      0.38 &         Mam. \\
  70 &     1.46 &      0.75 &         Mam. \\
  71 &     1.06 &      0.46 &         Mam. \\
  72 &     1.00 &      0.44 &         Mam. \\
  73 &     1.00 &      0.44 &         Mam. \\
  74 &     1.00 &      0.44 &         Mam. \\
  75 &     1.02 &      0.44 &         Mam. \\
  76 &     1.03 &      0.44 &         Mam. \\
  77 &     1.02 &      0.44 &         Mam. \\
  78 &     1.03 &      0.44 &         Mam. \\
  79 &     1.04 &      0.43 &         Mam. \\
\bottomrule
\end{tabular}
}}
\hfill
\subfloat[Max. QALY~\label{tab:hr-unconstrained-max-QALYs-sim}]
{\resizebox{0.31\textwidth}{!}{
\begin{tabular}{rrrl}
\toprule
 age &  in-situ &  invasive & action \\
\midrule
  40 &     0.85 &      1.60 &         Mam. \\
  41 &     0.37 &      0.40 &    Mam. + US \\
  43 &     0.34 &      0.21 &    Mam. + US \\
  45 &     0.39 &      0.22 &    Mam. + US \\
  47 &     0.46 &      0.25 &    Mam. + US \\
  48 &     0.38 &      0.19 &    Mam. + US \\
  50 &     0.50 &      0.27 &         Mam. \\
  52 &     0.54 &      0.29 &         Mam. \\
  54 &     0.59 &      0.31 &         Mam. \\
  55 &     0.50 &      0.24 &         Mam. \\
  57 &     0.71 &      0.38 &         Mam. \\
  59 &     0.81 &      0.43 &         Mam. \\
  60 &     0.66 &      0.31 &         Mam. \\
  61 &     0.66 &      0.31 &         Mam. \\
  62 &     0.68 &      0.32 &         Mam. \\
  63 &     0.70 &      0.33 &         Mam. \\
  64 &     0.74 &      0.34 &         Mam. \\
  65 &     0.77 &      0.36 &         Mam. \\
  66 &     0.80 &      0.37 &         Mam. \\
  67 &     0.83 &      0.38 &         Mam. \\
  68 &     0.87 &      0.40 &         Mam. \\
  69 &     0.91 &      0.41 &         Mam. \\
  70 &     0.95 &      0.42 &         Mam. \\
  71 &     0.98 &      0.43 &         Mam. \\
  73 &     1.37 &      0.67 &         Mam. \\
  74 &     1.06 &      0.46 &         Mam. \\
  75 &     1.02 &      0.44 &         Mam. \\
  77 &     1.42 &      0.67 &         Mam. \\
  79 &     1.47 &      0.67 &         Mam. \\
\bottomrule
\end{tabular}
}}
\end{table}

\subsection{Sensitivity Analysis}\label{sec:sensitivity-analysis}

We conclude our numerical study with a sensitivity analysis for important model parameters. 
We first consider the impact of differing disutility values on the unconstrained model. 
These results are quantified using QALYs and LBCMR, as well as the percentage of decision epochs where a screening action was taken under ``M/R/U'' columns.
The various disutilities impact QALYs only and have no effect on the LBCMR.
Accordingly, the LBCMR minimization objective is excluded from the sensitivity analysis, as all combinations of disutilities will produce the same LBCMR with this objective function. 
Preliminary experiments in the sensitivity analysis on disutilities focus only on those cases where TP and FP are the same, except for the baseline disutility values obtained from the literature. 
This is because experiments showed that varying TP while keeping FP constant had a negligible impact on QALYs, LBCMR, and screening composition. 
Accordingly, the TP and FP disutilities are amalgamated into a single term, PT, representing the disutility of a positive test result.

Table~\ref{tab:ar-disutility-sens} shows that when there is no disutility associated with any of the screenings nor for TP or FP, MRI is employed extensively. 
This is intuitive, as what prevents the general use of MRI is the disutility associated with the screening test (2.5 days) as well as the risk of false positives. 
The addition of PT disutilities has a substantial impact on the screenings recommended. 
When the disutility of mammography and ultrasound are the same, the introduction of PT disutilities --- even as little as 7 days --- causes the QALY maximization policies to prefer ultrasound for nearly every screening, as this supplemental screening action reduces the likelihood of false positives. 
However, when ultrasound has a higher disutility than mammography, the model prefers to use the mammography action, even when false positives incur a 28-day disutility. 
Similarly, a supplemental MRI is often recommended when MRI has the same disutility as mammography and the disutility of a positive test is zero, but when positive tests are assigned a disutility value, the MRI action is no longer favourable for maximizing QALYs as it raises the risk of false positives.

\begin{table}[htb]
\centering
\caption{Sensitivity analysis results for disutility values considering AR patients with unlimited budgets (M/R/U: screening action percentages; LBCMR values are in \%).}
\label{tab:ar-disutility-sens}
\resizebox{\textwidth}{!}{
\begin{tabular}{rrrrrrrrrr}
\toprule
    &     &     &    & \multicolumn{3}{c}{Weighted} & \multicolumn{3}{c}{Max. QALYs} \\
    \cmidrule(lr){5-7} \cmidrule(lr){8-10}
M & M+R & M+U & PT  &     QALYs & LBCMR &          M/R/U &      QALYs & LBCMR & M/R/U \\
\midrule
\multicolumn{4}{c}{Baseline} & 40.22 &      4.08 &  57.2/0.0/13.9 &     40.25 &      4.50 &  26.8/0.0/12.2 \\
\midrule
0 & 0 & 0 & 0  &    40.46 &      3.98 &  16.7/72.6/8.5 &     40.48 &      4.17 &  2.7/73.9/18.9 \\
    &     &     & 7  &    40.40 &      4.00 &   9.3/2.9/85.6 &     40.42 &      4.18 &  0.0/0.0/100.0 \\
    &     &     & 14 &    40.35 &      4.00 &   0.1/0.0/97.6 &     40.36 &      4.20 &   0.0/0.0/84.9 \\
    &     &     & 28 &    40.26 &      4.01 &   0.3/0.2/79.9 &     40.29 &      4.38 &   0.1/0.0/60.1 \\
    &     & 0.5 & 0  &    40.46 &      3.99 &   5.3/73.8/0.0 &     40.48 &      4.17 &   5.0/71.5/0.0 \\
    &     &     & 7  &    40.39 &      3.98 &   94.3/3.0/0.0 &     40.41 &      4.17 &   93.5/2.9/0.0 \\
    &     &     & 14 &    40.33 &      4.00 &   76.6/4.1/0.0 &     40.34 &      4.23 &   78.4/0.4/0.0 \\
    &     &     & 28 &    40.23 &      4.04 &  59.2/0.2/14.1 &     40.27 &      4.42 &  33.5/0.0/14.7 \\
    & 2 & 0 & 0  &    40.45 &      3.98 &  70.9/0.0/15.0 &     40.47 &      4.19 &  76.4/0.0/13.0 \\
    &     &     & 7  &    40.40 &      4.00 &   9.3/0.0/88.5 &     40.42 &      4.19 &  0.0/0.0/100.0 \\
    &     &     & 14 &    40.35 &      4.00 &   0.1/0.0/97.6 &     40.36 &      4.20 &   0.1/0.0/83.4 \\
    &     &     & 28 &    40.26 &      4.01 &   0.4/0.0/80.2 &     40.29 &      4.37 &   0.3/0.0/58.7 \\
    &     & 0.5 & 0  &    40.45 &      3.98 &   78.9/0.0/0.0 &     40.47 &      4.19 &   78.1/0.0/0.0 \\
    &     &     & 7  &    40.39 &      4.00 &   97.8/0.0/0.0 &     40.40 &      4.17 &   96.2/0.0/0.0 \\
    &     &     & 14 &    40.33 &      3.99 &   85.5/0.0/0.0 &     40.34 &      4.23 &   79.0/0.0/0.0 \\
    &     &     & 28 &    40.23 &      4.04 &  61.5/0.0/14.3 &     40.27 &      4.42 &  32.8/0.0/14.9 \\
0.5 & 0.5 & 0.5 & 0  &    40.41 &      3.96 &   2.4/87.9/0.0 &     40.43 &      4.15 &  0.0/100.0/0.0 \\
    &     &     & 7  &    40.35 &      3.99 &  16.7/2.9/78.1 &     40.37 &      4.21 &   0.4/3.7/79.0 \\
    &     &     & 14 &    40.31 &      4.01 &   0.5/1.1/79.3 &     40.32 &      4.28 &   0.7/0.0/74.5 \\
    &     &     & 28 &    40.23 &      4.04 &   0.5/0.2/76.7 &     40.26 &      4.44 &   0.4/0.0/50.8 \\
    &     & 1.0 & 0  &    40.41 &      3.96 &   0.0/97.8/0.0 &     40.43 &      4.16 &   2.3/97.7/0.0 \\
    &     &     & 7  &    40.34 &      3.98 &   90.9/6.0/0.0 &     40.36 &      4.20 &   73.9/4.1/0.0 \\
    &     &     & 14 &    40.29 &      4.00 &   73.6/4.1/0.0 &     40.31 &      4.34 &   56.2/3.5/0.0 \\
    &     &     & 28 &    40.21 &      4.08 &  56.6/0.4/13.9 &     40.25 &      4.49 &  28.5/0.4/11.7 \\
    & 2.5 & 0.5 & 0  &    40.40 &      3.98 &   97.8/0.0/0.0 &     40.42 &      4.17 &  100.0/0.0/0.0 \\
    &     &     & 7  &    40.35 &      3.99 &  13.8/0.0/84.0 &     40.37 &      4.20 &   0.8/0.0/82.5 \\
    &     &     & 14 &    40.31 &      4.01 &   1.5/0.0/79.6 &     40.32 &      4.29 &   0.7/0.0/72.9 \\
    &     &     & 28 &    40.23 &      4.04 &   0.5/0.0/76.9 &     40.26 &      4.44 &   0.6/0.0/49.7 \\
    &     & 1.0 & 0  &    40.40 &      3.98 &   97.8/0.0/0.0 &     40.42 &      4.17 &  100.0/0.0/0.0 \\
    &     &     & 7  &    40.34 &      3.99 &   97.3/0.0/0.0 &     40.36 &      4.21 &   81.8/0.0/0.0 \\
    &     &     & 14 &    40.29 &      4.00 &   78.7/0.0/0.0 &     40.31 &      4.34 &   62.2/0.0/0.0 \\
    &     &     & 28 &    40.21 &      4.08 &  57.1/0.0/13.9 &     40.25 &      4.49 &  27.6/0.0/11.8 \\
\bottomrule
\end{tabular}}
\end{table}

Table~\ref{tab:hr-disutility-sens} explores the impact of varying disutilities on HR patients.
The results show that the composition of screenings follows a similar pattern to that of AR patients: as false positive disutilities are introduced, supplemental ultrasounds begin to be prescribed until the disutility of a supplemental ultrasound differs from that of mammography, in which case mammography alone becomes the favoured test.
Supplemental MRIs are never used when their disutility differs from that of mammography, even when false positive disutilities are zero. 
These results suggest that, while supplemental MRI offers increased sensitivity and supplemental ultrasound offers increased specificity, the gain in QALYs from these advantages is outweighed by the disutility of undergoing a supplemental test.

In general, we observe that, when the disutility of a supplemental screening test is greater than the disutility of mammography alone, the supplemental test is not used. 
One exception to this rule is supplemental ultrasound when the disutility of a positive test is 28 days. 
It turns out that the reduced risk of false positives outweighs the additional disutility of the supplemental ultrasound to some extent, although in this case, the majority of screening actions are still mammography alone. 
For the AR patients, in the case of weighted objective, the additional disutility of the MRI action prevents MRIs from being prescribed when its disutility is different from mammography. 
In contrast, for the HR weighted objective, the MRI actions are used about 3.1\% of the time when the disutility of mammography + MRI is 2 days more than the disutility of mammography alone.
It is also observed from these sensitivity analysis results that having no additional disutility for these supplemental tests would favour a higher usage of the supplemental screenings.

\begin{table}[htb]

\centering
\caption{Sensitivity analysis results for disutility values considering HR patients with unlimited budgets (M/R/U: screening action percentages; LBCMR values are in \%).}
\label{tab:hr-disutility-sens}
\resizebox{\textwidth}{!}{\begin{tabular}{rrrrrrrrrr}
\toprule
    &     &     &    & \multicolumn{3}{c}{Weighted} & \multicolumn{3}{c}{Max. QALYs} \\
    \cmidrule(lr){5-7} \cmidrule(lr){8-10}
M & M+R & M+U & PT &     QALYs & LBCMR &           M/R/U &      QALYs & LBCMR &          M/R/U \\
\midrule
 \multicolumn{4}{c}{Baseline} & 39.59 &      7.97 &  64.9/3.1/26.4 &     39.62 &      8.52 &  55.3/0.0/13.8 \\
 \midrule
0 & 0 & 0 & 0  &    39.89 &      7.97 &    6.4/86.7/4.9 &     39.92 &      8.26 &  0.8/86.0/11.2 \\
    &     &     & 7  &    39.81 &      7.96 &  44.7/19.1/34.1 &     39.85 &      8.31 &   9.2/5.7/85.0 \\
    &     &     & 14 &    39.76 &      7.98 &    4.1/3.1/90.5 &     39.79 &      8.29 &   0.0/3.1/95.9 \\
    &     &     & 28 &    39.66 &      7.99 &    0.2/3.1/93.6 &     39.69 &      8.30 &   0.0/3.1/92.6 \\
    &     & 0.5 & 0  &    39.89 &      7.97 &    0.6/88.2/0.0 &     39.92 &      8.26 &   1.1/89.0/0.0 \\
    &     &     & 7  &    39.81 &      7.94 &   80.7/17.2/0.0 &     39.84 &      8.27 &   94.3/5.7/0.0 \\
    &     &     & 14 &    39.74 &      7.95 &    90.8/5.8/0.0 &     39.77 &      8.25 &   95.7/3.2/0.0 \\
    &     &     & 28 &    39.62 &      7.96 &   67.6/3.2/26.1 &     39.65 &      8.36 &  52.2/3.3/29.8 \\
    & 2 & 0 & 0  &    39.87 &      7.97 &    84.1/3.1/5.9 &     39.90 &      8.30 &  85.2/0.0/11.1 \\
    &     &     & 7  &    39.81 &      7.97 &   57.1/3.1/37.6 &     39.84 &      8.32 &   7.4/0.0/92.6 \\
    &     &     & 14 &    39.76 &      7.98 &    4.2/3.1/90.5 &     39.79 &      8.31 &   0.0/0.0/99.3 \\
    &     &     & 28 &    39.66 &      7.99 &    0.2/3.1/93.6 &     39.68 &      8.31 &   0.0/0.0/96.2 \\
    &     & 0.5 & 0  &    39.87 &      7.97 &    81.6/3.1/0.0 &     39.90 &      8.30 &   86.1/0.0/0.0 \\
    &     &     & 7  &    39.80 &      7.95 &    94.7/3.1/0.0 &     39.84 &      8.28 &  100.0/0.0/0.0 \\
    &     &     & 14 &    39.74 &      7.95 &    93.6/3.1/0.0 &     39.77 &      8.28 &   96.1/0.0/0.0 \\
    &     &     & 28 &    39.62 &      7.95 &   67.7/3.1/26.1 &     39.65 &      8.38 &  58.5/0.0/27.1 \\
0.5 & 0.5 & 0.5 & 0  &    39.84 &      7.95 &    0.0/97.8/0.0 &     39.87 &      8.26 &   4.9/95.1/0.0 \\
    &     &     & 7  &    39.77 &      7.97 &  30.6/21.8/25.6 &     39.80 &      8.29 &   3.6/5.8/89.5 \\
    &     &     & 14 &    39.72 &      7.99 &    2.4/3.2/91.4 &     39.74 &      8.28 &   0.0/3.1/95.9 \\
    &     &     & 28 &    39.62 &      7.99 &    0.2/3.2/93.6 &     39.64 &      8.37 &   0.1/3.1/86.9 \\
    &     & 1.0 & 0  &    39.84 &      7.95 &    0.0/97.8/0.0 &     39.87 &      8.25 &  0.0/100.0/0.0 \\
    &     &     & 7  &    39.76 &      7.94 &   86.4/11.4/0.0 &     39.79 &      8.26 &   93.0/5.8/0.0 \\
    &     &     & 14 &    39.70 &      7.96 &    90.8/5.8/0.0 &     39.73 &      8.26 &   92.9/3.2/0.0 \\
    &     &     & 28 &    39.58 &      7.97 &   64.6/3.2/27.2 &     39.62 &      8.51 &  48.7/3.6/13.7 \\
    & 2.5 & 0.5 & 0  &    39.82 &      7.94 &    94.7/3.1/0.0 &     39.86 &      8.29 &  100.0/0.0/0.0 \\
    &     &     & 7  &    39.77 &      7.97 &   62.3/3.1/32.4 &     39.79 &      8.30 &   5.5/0.0/93.7 \\
    &     &     & 14 &    39.72 &      7.99 &    2.4/3.1/91.8 &     39.74 &      8.29 &   3.1/0.0/96.1 \\
    &     &     & 28 &    39.61 &      7.99 &    1.5/3.1/92.3 &     39.64 &      8.39 &   0.1/0.0/89.2 \\
    &     & 1.0 & 0  &    39.82 &      7.96 &    94.7/3.1/0.0 &     39.86 &      8.29 &  100.0/0.0/0.0 \\
    &     &     & 7  &    39.76 &      7.96 &    94.7/3.1/0.0 &     39.79 &      8.28 &   99.0/0.0/0.0 \\
    &     &     & 14 &    39.69 &      7.96 &    93.6/3.1/0.0 &     39.72 &      8.28 &   96.2/0.0/0.0 \\
    &     &     & 28 &    39.57 &      7.96 &   66.0/3.1/26.1 &     39.61 &      8.51 &  54.9/0.0/13.9 \\
\bottomrule
\end{tabular}}
\end{table}

Table~\ref{tab:cost-sens-analysis} investigates the impact of varying costs for supplemental screenings on the QALYs and LBCMR values, as well as their impact on the different types of screenings used. 
In designing this experiment, we considered the results in Table~\ref{tab:simulated-actions-taken}, which revealed that, when subjected to a budget constraint, screening policies typically utilize only the mammography action since the supplemental tests tend to be prohibitively expensive. 
Even in the case of QALY maximization, where the risk of false positives must be taken into account, it is observed that using mammography alone is more preferable, which has a higher false positive rate than mammography plus ultrasound, both due to the disutility of supplemental ultrasound (1 day vs 0.5 days for mammography alone) and the higher sensitivity of the mammography action. 
Accordingly, in this sensitivity analysis, the cost of mammography is considered to be constant. 
As all other screenings involve mammography, the cost of the supplemental screening tests cannot be below the cost of a mammography.

\begin{table}[htbp]
\caption{Sensitivity analysis results for cost values considering average budget limit ($\parBudgetLim=\$850$; M/R/U: screening action percentages; LBCMR values are in \%).}
\label{tab:cost-sens-analysis}
\resizebox{\textwidth}{!}{
\begin{tabular}{lrrrrrrrrrrrr}
\toprule
      &     &      &     & \multicolumn{3}{c}{Min. LBCMR} & \multicolumn{3}{c}{Weighted} & \multicolumn{3}{c}{Max. QALYs} \\
      \cmidrule(lr){5-7}\cmidrule(lr){8-10}\cmidrule(lr){11-13}
Patient & M & M+R & M+U & QALYs &  LBCMR & M/R/U & QALYs & LBCMR & M/R/U & QALYs &  LBCMR & M/R/U \\
\midrule
AR & 134 & 134  & 134 &     40.16 &  4.44 &  0.0/14.2/1.9 &    40.22 &  4.49 &  14.9/0.8/0.0 &     40.24 &  4.71 &  14.5/0.0/3.4 \\
      &     & 161  & 161 &     40.20 &  4.46 &  14.1/2.3/0.1 &    40.22 &  4.49 &  15.4/0.2/0.0 &     40.24 &  4.70 &  15.8/0.0/0.0 \\
      &     & 188  & 188 &     40.21 &  4.46 &  16.2/0.4/0.0 &    40.22 &  4.49 &  15.5/0.0/0.0 &     40.24 &  4.70 &  16.1/0.0/0.0 \\
      &     & 538  & 161 &     40.21 &  4.46 &  16.8/0.0/0.1 &    40.22 &  4.49 &  15.8/0.0/0.0 &     40.24 &  4.70 &  15.9/0.0/0.0 \\
      &     & 943  & 188 &     40.21 &  4.46 &  16.8/0.0/0.0 &    40.22 &  4.49 &  15.5/0.0/0.0 &     40.24 &  4.70 &  16.2/0.0/0.0 \\
      \cmidrule(lr){3-13}
      &     & 1752 & 243 &     40.21 &  4.46 &  14.9/0.0/0.0 &    40.22 &  4.49 &  15.6/0.0/0.0 &     40.24 &  4.70 &  16.0/0.0/0.0 \\
\midrule
HR & 134 & 134  & 134 &     39.44 &  8.67 &  1.8/16.1/1.6 &    39.51 &  8.74 &  13.5/5.8/0.1 &     39.54 &  9.09 &  12.0/4.0/0.0 \\
      &     & 161  & 161 &     39.47 &  8.73 &  10.6/5.8/0.2 &    39.51 &  8.78 &  12.9/3.9/0.0 &     39.54 &  9.12 &  12.9/3.1/0.0 \\
      &     & 188  & 188 &     39.47 &  8.75 &  20.0/0.2/0.0 &    39.51 &  8.80 &  17.2/0.0/0.0 &     39.54 &  9.14 &  16.0/0.0/0.0 \\
      &     & 538  & 161 &     39.47 &  8.75 &  20.1/0.0/0.2 &    39.51 &  8.80 &  17.2/0.0/0.0 &     39.54 &  9.14 &  16.0/0.0/0.0 \\
      &     & 943  & 188 &     39.47 &  8.75 &  20.4/0.0/0.0 &    39.51 &  8.80 &  17.2/0.0/0.0 &     39.54 &  9.14 &  16.0/0.0/0.0 \\
      \cmidrule(lr){3-13}
      &     & 1752 & 243 &     39.47 &  8.75 &  20.4/0.0/0.0 &    39.51 &  8.80 &  17.2/0.0/0.0 &     39.54 &  9.14 &  16.0/0.0/0.0 \\
\bottomrule
\end{tabular}}
\end{table}

The results in Table~\ref{tab:cost-sens-analysis} show that even when the supplemental tests are subsidized greatly, they are still not overwhelmingly utilized.
This is particularly the case for the weighted and QALY maximization objectives, as the disutilities of these supplemental screenings can often outweigh their benefits. 
For AR patients, supplemental screenings are effectively unused when their cost exceeds 1/4 of the cost of an ultrasound. 
We observe that when supplemental screenings are free, AR patients still tend to use the mammography action as opposed to using the ultrasound action. 
This is due to the higher disutility of the supplemental test, as well as the fact that the supplemental test is assumed to have a lower sensitivity than mammography alone. 
For HR patients, the risk of developing cancer is much greater. 
Accordingly, the higher specificity of the ultrasound is not as useful as the higher sensitivity of the other tests. 
However, as MRI has a much higher disutility compared to mammography (2.5 days versus half a day) the mammography action is still used for the majority of screenings for HR patients, even when supplemental MRI is cheap or free.

The results of the sensitivity analysis show that, in general, supplemental screenings are too expensive, and have too high of a disutility to justify their usage in a constrained setting. 
This is particularly the case for supplemental MRIs, which raise the price of screening by an order of magnitude. 
Supplemental screenings would be vastly more beneficial if they were scheduled in such a way as to minimize the overhead disutility that they incur, for example by performing them in the same appointment, and by subsidizing their cost.


\section{Conclusion and Future Work}\label{sec:Conclusion}
In this study, we present a multi-objective constrained partially observable Markov decision process model for the breast cancer screening problem, which involves three partially observable health states and three screening modalities.
We employ grid-based approximation methods to formulate an approximate mixed-integer linear programming model, which enables solving the multi-objective CPOMDP model using a weighted combination of the objectives, as well as the $\epsilon$-constraint method. 
We investigate the impact of supplemental tests on QALYs and LBCMR in a constrained environment and show that many policies opt to use mammography alone due to the additional cost and disutility of the supplemental tests. 
We also compare budget-constrained policies with several rule-based ones and show that many constrained policies can achieve similar results to rule-based screenings while incurring a considerably reduced cost. 
We then provide policy evaluations for a patient subject to several assumptions in the decision process using various policies. 
These policy evaluations reveal that, especially for high-risk patients, prescribed policies can vary considerably between the LBCMR minimization and QALY maximization objectives. 
We conduct a sensitivity analysis on the screening disutility values and the cost of screening test and identify conditions under which the addition of supplemental tests becomes favourable.
We improve the solvability of this approximate model by eliminating unuseful grid points from the formulation, which leads to substantial gains in terms of CPU run times.
We conduct detailed experiments to assess the trade-off between the QALY and LBCMR optimization objectives.
Being able to quantify such trade-offs can help health policymakers in designing more informed personalized screening policies.

Our research has certain limitations which can be, in part, addressed in future studies.
First, because of the computational challenges, our baseline POMDP model is limited to only three health states.
Patients' adherence behaviours as well as dynamic risk factors such as breast density can be incorporated into the state space to have a more representative model for the breast cancer screening problem.
Secondly, imposing budget limits for personalized screening involves setting constraints over the lifetime of the patients.
This approach might limit the adoption of these policies in practice because it would be difficult to keep track of the patients over their lifetimes.
On the other hand, our models also guide the policymakers in terms of which screening ages are more important when there are limited available resources.
Such insights can be particularly useful in settings where governments or institutions would need to carefully allocate screening resources to maximize early detection of cancer.
We also note that emerging screening modalities (e.g., tomosynthesis) might be incorporated into our models.
Similarly, a sensitivity analysis around the screening test performance values can be used to understand at what performance levels the supplemental screenings become more beneficial.
Lastly, the budget constraints imposed in our formulations only account for screening and disregard various other costs including those of diagnostic tests and treatment.
A more complex CPOMDP formulation can be developed in the future to account for these costs.

\section*{Acknowledgements}

This research was enabled in part by support provided by the Digital Research Alliance of Canada (\url{alliancecan.ca}).
This work is also in part supported by NSERC Discovery Grant RGPIN-2019-05588.

\bibliographystyle{spbasic}
\bibliography{main_bcs_springer}

\appendix

\clearpage

\section{Grid construction}
\label{sec:appendix-grid-construction}

The belief simplex $\beSimp$ is the continuous, infinite set of possible belief values for a state space $\setState$.
In this study, finite subsets of the belief simplex are used as approximations.
Such approximate grid sets are generated using a variable-resolution uniform grid construction approach, a modification of the commonly used fixed-resolution grid construction approach.

\subsection{Fixed-resolution grid construction approach}
\label{sec:appendix-fixed-resolution-grid-construction}
In the fixed-resolution grid approach, the grid construction process involves sampling beliefs at equidistant intervals along the dimensions of the state space. The intervals are set according to the resolution parameter, $\resoVal$, where in each dimension the allowed values must be an integer multiple of $\resoVal^{-1}$. These values are further constrained by the fact that for any state, the belief can be no less than zero and no greater than one, as all belief states represent a probability distribution. Accordingly, the allowed values for the belief component $\grPt_\indState$ of the grid point $\vectGrPt$ are given by
\begin{subequations}
\begin{equation}
    \label{apx-eq:grid-component-allowed-values}
    \setAllowedBeliefComponents(\resoVal)=\left\{1, \frac{\resoVal-1}{\resoVal}, \frac{\resoVal-2}{\resoVal}, \ldots, \frac{2}{\resoVal}, \frac{1}{\resoVal},0 \right\}
\end{equation}
As $\vectGrPt$ is a probability distribution, it follows that 
\begin{equation}
\label{apx-eq:grid-component-normalization}
    \sum_\indState \vectGrPt_\indState = 1
\end{equation}
\end{subequations}
Together, Equations~\ref{apx-eq:grid-component-allowed-values} and~\ref{apx-eq:grid-component-normalization} provide the full set of constraints on the beliefs in a fixed-resolution grid set. The grid set can be generated in two steps: take the Cartesian product of $\setAllowedBeliefComponents$ with itself $|\setState|$ times and then filter off all elements that do not satisfy Equation~\ref{apx-eq:grid-component-normalization}.

\paragraph{Fixed-resolution grid set example}

In this example, a fixed-resolution grid set is generated for a model with two states and a resolution value of $\resoVal=2$.
According to Equation~\ref{apx-eq:grid-component-allowed-values}, the allowed values for each component of the approximate grids are given by
\begin{subequations}
\begin{equation}
    \setAllowedBeliefComponents(2)=\left\{1, \frac{1}{2}, 0\right\}
\end{equation}
Let $\mathcal{C}$ denote the Cartesian product of $\setAllowedBeliefComponents$ with itself.
In this case, there are 2 states so $\mathcal{C}$ is given by the Cartesian product of $\setAllowedBeliefComponents$ with itself twice.
Thus
\begin{align}
    \mathcal{C}&= \setAllowedBeliefComponents(2) \times \setAllowedBeliefComponents(2)\\
    \mathcal{C}&=\left\{[1, 1], [1, \tfrac{1}{2}], [1, 0], [\tfrac{1}{2}, 1], [\tfrac{1}{2}, \tfrac{1}{2}], [\tfrac{1}{2}, 0], [0, 1], [0, \tfrac{1}{2}], [0, 0]\right\}
\end{align}
$\setGrid$ is then given by all of the elements of $\mathcal{C}$ that satisfy Equation~\ref{apx-eq:grid-component-normalization}, giving
\begin{equation}
    \setGrid=\left\{[1, 0], [\tfrac{1}{2}, \tfrac{1}{2}], [0, 1]\right\}
\end{equation}
\end{subequations}

\subsection{Variable resolution uniform grid construction approach}
\label{sec:appendix-variable-resolution-grid-construction}
When approximating the belief simplex, the approximation tends to improve as the resolution value $\resoVal$, and therefore the number of grids, is increased. 
However, the number of grids has a considerable impact on run times.
Accordingly, the grids in the grid set must be chosen carefully to ensure that the best approximation is achieved while run times remain feasible.
\citet{Cevik2018} and~\citet{sandikci2018screening} found that, in the breast cancer screening problem, the majority of patients are much closer to being cancer free (i.e., $\vectBePt=[1, 0, 0]$) than to any of the cancer states, and found that a variable resolution uniform grid construction approach could be beneficial for this problem.
A variable resolution uniform grid set is a grid set $\setGrid$ that uses grids from different $\setGrid_\resoVal$ to approximate different regions of the belief simplex.

In this research, the belief simplex is divided into three regions based on the probability that a patient is healthy.
A given threshold vector $\vectResoThreshold=[\resoThreshold_1, \resoThreshold_2, \resoThreshold_3]$ divides the belief simplex into the regions $[1,~\resoThreshold_1)$, $[\resoThreshold_1,~\resoThreshold_2)$, and $[\resoThreshold_2,~\resoThreshold_3]$.
To ensure the entire belief simplex is accounted for, $\resoThreshold_3$ must be equal to zero.
These three regions define thresholds on the probability that a patient is healthy, $\bePt_0$.
For each region, a different grid set is used to select grid points from.
These grid sets are given by $\vectResoVal=[\resoVal_1, \resoVal_2, \resoVal_3]$, and accordingly the first, second, and third regions select their grids from $\setGrid_{\resoVal_1}$, $\setGrid_{\resoVal_2}$, and $\setGrid_{\resoVal_3}$, respectively.
Specifically, in the region given by $[1,~\resoThreshold_1)$, all grid points from $\setGrid_{\resoVal_1}$ that satisfy $\bePt_0 \in [1,~\resoThreshold_1)$ are added to the final approximate grid set, $\setGrid$.
Likewise, the second region draws grids from $\setGrid_{\resoVal_2}$ that satisfy $\bePt_0 \in [\resoThreshold_1,~\resoThreshold_2)$.
The third region draws from $\setGrid_{\resoVal_3}$ for $\bePt_0 \in [\resoThreshold_2,~\resoThreshold_3]$.
Unlike the preceding regions, the final region has inclusive boundaries on both ends (i.e., in the third region, belief points where $\bePt_0=\resoThreshold_3$ are included).
The grid set $\setGrid$ is given by
\begin{equation}
    \setGrid = \left(\bigcup_{\indResoVal=1}^{|\vectResoThreshold| - 1} \{\vectBePt\mid \vectBePt\in \setGrid_{\resoVal_{\indResoVal}},~\bePt_0\in[\resoThreshold_{\indResoVal-1},~\resoThreshold_{\indResoVal})\}\right) \cup \{\vectBePt\mid \vectBePt\in \setGrid_{\resoVal_{|\vectResoThreshold|}},~\bePt_0\in[\resoThreshold_{|\vectResoThreshold|-1},~\resoThreshold_{|\vectResoThreshold|}]\}
\end{equation}
Where $|\vectResoThreshold|$ gives the size of the resolution threshold vector, $\vectResoThreshold$, and $\resoThreshold_0=1$.
Algorithm~\ref{alg:variable-resolution-grid-construction} gives the procedure for constructing the grid set.
\begin{algorithm}[!ht]

\caption{ Variable resolution uniform grid-construction approach.}
\label{alg:variable-resolution-grid-construction}
\begin{algorithmic}[1] 
\Procedure{\texttt{Get\_Grid\_Set}}{$\vectResoVal$, $\vectResoThreshold$}
    \State{$\setGrid\gets \emptyset$}
    \For{$\indResoVal \in 1\ldots |\vectResoThreshold|-1$}
        \State{$\setGrid \gets \setGrid \cup \{\vectBePt\mid \vectBePt\in \setGrid_{\resoVal_\indResoVal}, \bePt_0\in[\resoThreshold_{\xi-1},~\resoThreshold_{\indResoVal})\}$}
        \Comment{Here, $\resoThreshold_0=1$.}
    \EndFor
    \State{$\setGrid \gets \setGrid \cup \{\vectBePt\mid \vectBePt\in \setGrid_{\resoVal_|\vectResoThreshold|}, \bePt_0\in[\resoThreshold_{|\vectResoThreshold|-1},~\resoThreshold_{|\vectResoThreshold|}]\}$}\Comment{Inclusive right boundary.}
    \State{\Return{$\setGrid$}}
\EndProcedure
\end{algorithmic}
\end{algorithm}

\paragraph{Variable resolution uniform grid construction example}
Consider the breast cancer screening problem, which uses three states, and a resolution vector of $\vectResoVal=[3,~2]$ and a threshold vector of $\vectResoThreshold=[\tfrac{1}{2},~0]$. 
The fixed resolution grid sets needed for this problem are $\setGrid_3$ and $\setGrid_2$, which are
\begin{subequations}
\begin{align}
    \setGrid_3&=
        \resizebox{0.8\textwidth}{!}{$\{[1, 0, 0], [\tfrac{2}{3}, \tfrac{1}{3}, 0], [\tfrac{2}{3}, 0, \tfrac{1}{3}], [\tfrac{1}{3}, \tfrac{2}{3}, 0], [\tfrac{1}{3}, \tfrac{1}{3}, \tfrac{1}{3}], [\tfrac{1}{3}, 0, \tfrac{2}{3}], [0, 1, 0], [0, \tfrac{2}{3}, \tfrac{1}{3}], [0, \tfrac{1}{3}, \tfrac{2}{3}], [0, 0, 1]\}$}\\
    \setGrid_2&=\{[1, 0, 0], [\tfrac{1}{2}, \tfrac{1}{2}, 0], [\tfrac{1}{2}, 0, \tfrac{1}{2}], [0, 1, 0], [0, \tfrac{1}{2}, \tfrac{1}{2}], [0, 0, 1]\}
\end{align}
Following along with Algorithm~\ref{alg:variable-resolution-grid-construction}, $\setGrid$ is initialized as the empty set.
In the first iteration over the for loop, $\indResoVal=1$. 
Here, the threshold region is $[\resoThreshold_{0},~\resoThreshold_{1}]=[1,~\tfrac{1}{2}])$.
The selected grid points have $\bePt_0\in[1,~\tfrac{1}{2})$ and, for $\setGrid_3$, this is all regions where the probability of being healthy is either $1$ or $\tfrac{2}{3}$.
Notably, if a belief point from $\setGrid_3$ had $\bePt_0=\tfrac{1}{2}$, this belief point would not be included as the right boundary is exclusive in the for loop.
After this step, the grid set is
\begin{equation}
    \setGrid=\{[1, 0, 0], [\tfrac{2}{3}, \tfrac{1}{3}, 0], [\tfrac{2}{3}, 0, \tfrac{1}{3}]\}
\end{equation}
This concludes the for loop in Algorithm~\ref{alg:variable-resolution-grid-construction} for this example.
The next threshold region is $[\tfrac{1}{2}, 0]$ corresponding to the fixed resolution grid $\setGrid_0$ and, unlike the regions in the for loop, the right boundary is inclusive, meaning that states containing $\bePt_0=0$ in $\setGrid_2$ are included in the final grid set.
All points except for $[1,~0~,0]$ lie within this threshold region.
Thus
\begin{align}
    \setGrid&=\setGrid \cup \{[\tfrac{1}{2}, \tfrac{1}{2}, 0], [\tfrac{1}{2}, 0, \tfrac{1}{2}], [0, 1, 0], [0, \tfrac{1}{2}, \tfrac{1}{2}], [0, 0, 1]\}\\
    \setGrid&=\{[[1, 0, 0], [\tfrac{2}{3}, \tfrac{1}{3}, 0], [\tfrac{2}{3}, 0, \tfrac{1}{3}], [\tfrac{1}{2}, \tfrac{1}{2}, 0], [\tfrac{1}{2}, 0, \tfrac{1}{2}], [0, 1, 0], [0, \tfrac{1}{2}, \tfrac{1}{2}], [0, 0, 1]\}
\end{align}
\end{subequations}

\end{document}